\newtheorem{definition}{Definition}
\newtheorem{property}{Property}
\newtheorem{theorem}{Theorem}
\def\BibTeX{{\rm B\kern-.05em{\sc i\kern-.025em b}\kern-.08em
		T\kern-.1667em\lower.7ex\hbox{E}\kern-.125emX}}
\begin{document}
\title{GBFRS: Robust Fuzzy Rough Sets via Granular-ball Computing}
\author{Shuyin Xia, Xiaoyu Lian*, Binbin Sang, Guoyin Wang, Xinbo Gao 
	\thanks{X. Lian, S. Xia and X. Gao are with the Chongqing Key Laboratory of Computational Intelligence, Key Laboratory of Big Data Intelligent Computing, Key Laboratory of Cyberspace Big Data Intelligent Security, Ministry of Education, Chongqing University of Posts and Telecommunications, 400065, Chongqing, China. E-mail: lianxiaoyu724@qq.com, xiasy@cqupt.edu.cn, gaoxb@cqupt.edu.cn}
	\thanks{B. Sang and G. Wang are with the College of Computer and Information Science, the National Center for Applied Mathematics in Chongqing, Chongqing Normal University, Chongqing 401331, China. E-mail: sangbinbin	@cqnu.edu.cn, wanggy@cqupt.edu.cn.}}
\markboth{IEEE Transactions on Fuzzy Systems}%
{How to Use the IEEEtran \LaTeX \ Templates}
%

\maketitle

\begin{abstract}
Fuzzy rough set theory is effective for processing datasets with complex attributes, supported by a solid mathematical foundation and closely linked to kernel methods in machine learning. Attribute reduction algorithms and classifiers based on fuzzy rough set theory exhibit promising performance in the analysis of high-dimensional multivariate complex data. However, most existing models operate at the finest granularity, rendering them inefficient and sensitive to noise, especially for high-dimensional big data. Thus, enhancing the robustness of fuzzy rough set models is crucial for effective feature selection. Muiti-garanularty granular-ball computing,  a recent development, uses granular-balls of different sizes to adaptively represent and cover the sample space, performing learning based on these granular-balls. This paper proposes integrating multi-granularity granular-ball computing into fuzzy rough set theory, using granular-balls to replace sample points. The coarse-grained characteristics of granular-balls make the model more robust. Additionally, we propose a new method for generating granular-balls, scalable to the entire supervised method based on granular-ball computing. A forward search algorithm is used to select feature sequences by defining the correlation between features and categories through dependence functions. Experiments demonstrate the proposed model's effectiveness and superiority over baseline methods. The source codes and datasets are both available on the public link: https://github.com/lianxiaoyu724/GBFRS
\end{abstract}

\begin{IEEEkeywords}
Fuzzy rough set, granular-ball computing, granular-ball splitting, feature selection, robustness.
\end{IEEEkeywords}

\section{Introduction}
\IEEEPARstart{C}{lassic} rough sets can only handle symbolic data sets, requiring discretization of continuous and numerical attributes before data reduction, which may lead to the loss of classification information \cite{chen2020graph, hu2011robust}. Fuzzy rough sets (FRS), proposed by Dubois and Prade \cite{dubois1990rough}, provide an effective method to overcome data discretization issues and can handle continuous or numerical data sets without preprocessing. FRS is an effective tool for measuring sample uncertainty \cite{guo2017fuzzy}, characterizing uncertain data, and selecting informative features for downstream learning tasks. FRS has been applied to dimensionality reduction \cite{dai2017maximal,wang2019fuzzy,zhang2019active}, classification \cite{chen2020novel, hu2011robust}, regression analysis \cite{an2014fuzzy}, etc. In recent years, scholars have proposed various FRS-based feature selection models \cite{chen2013attribute, hu2023attribute, hu2006information, huang2023fuzzy, mac2019fuzzy, xu2023feature}, using fuzzy dependence functions to characterize the discriminative ability of feature subsets. However, this approach can lead to unstable classification performance on noisy data sets.

Noise samples often appear in the data, so this is one of the causes of data uncertainty. Generally speaking, there are two types of noise samples in data \cite{chen2020novel}. One is that the conditional attributes of the sample are abnormal (i.e., attribute noise) \cite{saez2022ances}, and the other is that the decision-making attributes of the sample are abnormal (i.e., quasi-noise) \cite{xia2018complete}. Since fuzzy rough sets are very sensitive to noise in uncertainty data, the evaluation of uncertainty is not accurate. The quality of data can be improved by improving the anti-noise performance of fuzzy rough set models \cite{an2021, an2021relative}. To improve the robustness to noise, Hu et al. \cite{hu2010soft} discussed the properties of the model and constructed a new dependency function from the model, using this function to evaluate and select features, and developed a new fuzzy rough set model called soft fuzzy rough set. Since fuzzy rough sets are very sensitive to noise in uncertainty data, the evaluation of uncertainty is not accurate. An et al. \cite{an2015data} proposed a probabilistic fuzzy rough model by considering the distribution information of the data. On this basis, fuzzy rough sets based on probabilistic particle distance are proposed to reduce the impact of noisy data, but the calculation amount is large and the relationship between sample labels within the neighborhood of the sample \cite{an2021probability} is ignored. Cornelis et al. \cite{cornelis2010ordered} proposed a mitigation method, which is to determine the membership degree of the upper and lower approximations through an ordered weighted average operator. Wang et al. \cite{wang2021new} further studied the equivalent expression of granular variable precision fuzzy rough sets based on fuzzy (co) meaning. It promotes the further development of fuzzy rough theory from a practical perspective. In addition, many studies consider dividing the sample set into different regions to improve the efficiency and robustness of feature selection. Hu et al. \cite{hu2009selecting} introduced a neighborhood rough set model to divide the sample set into a decision positive area and a decision boundary area and proposed a forward greedy strategy for searching feature subsets. This strategy minimizes the neighborhood decision error rate and accordingly minimizes the classification complexity of the selected feature subset. Wang et al. \cite{wang2016feature} used the concept of the fuzzy neighborhood to define fuzzy decision-making of samples, and introduced parameterized fuzzy relationships to characterize fuzzy information particles. A novel rough set model for feature subset selection is constructed using the relationship between fuzzy neighborhoods and fuzzy decision-making. And Wang et al. \cite{wang2017feature} used the concept of fuzzy neighborhood to define fuzzy decision-making of samples, and introduced parameterized fuzzy relationships to characterize fuzzy information particles. A novel rough set model for feature subset selection is constructed using the relationship between fuzzy neighborhoods and fuzzy decision-making to improve model performance. In addition, there are some other methods to optimize fuzzy rough sets for feature selection to improve model efficiency \cite{degang2010local, hu2009selecting,hu2006information}. Although the existing fuzzy rough set methods have achieved certain results in dealing with noise, most of them are based on the finest granularity as shown Fig. \ref{GBfrs}(a), and their efficiency is low and their robustness still needs to be improved.

\begin{figure}[!h]
		\centering
		\includegraphics[width=3.2in]{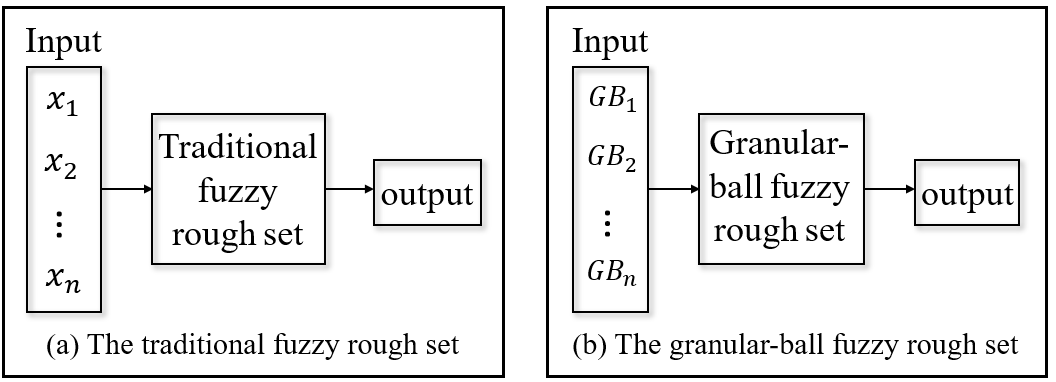}
		\caption{The comparison of traditional fuzzy rough sets and granular-ball fuzzy rough sets.}
		\label{GBfrs}
	\end{figure}

Multi-granularity computing, also known as granular computing, is a significant theoretical method in artificial intelligence. Since Lin and Zadeh proposed granular computing in 1996, scholars have increasingly studied granular information \cite{pedrycz2014allocation, yang2023multi, zhang2021double}, simulating human cognition to address complex problems \cite{hu2022multi, wang2023trilevel}. Academician Chen Lin's research results published in Sciences in 1982 pointed out that ``human cognition has the characteristic of large-scale first'': for an image, the first thing to see is the large outline, and only then will you get the specific information that makes up the outline \cite{chen1982topological}. Based on this cognitive mechanism, Wang and Xia et al. \cite{xia2019granular} proposed granular-ball computing, which divides data sets into granular-balls of different sizes, ensuring these granular-balls cover the sample and share the same characteristics. Granular-ball computing boasts robustness, efficiency, and scalability. Xia et al. \cite{xia2022gbsvm} used granular-balls generated on the data set instead of points as input to develop a granular-ball support vector machine (GBSVM) to improve classification accuracy and efficiency. In 2020, Xia and Zhang et al. \cite{xia2020gbnrs} introduced granular-ball computing into neighborhood rough sets to propose a granular-ball neighborhood rough set (GBNRS) method, which is the first to deal with continuous data without any problem-parametric rough set method. Further, Xia and Wang et al. \cite{xia2023gbrs} used granular-ball rough set (GBRS) based on granular-ball computing to simultaneously represent the unification of Pawlak rough set and neighborhood rough set. GBRS exhibits higher accuracy than traditional neighborhood rough sets and Pawlak rough sets. In addition, granular-ball computing was introduced into fuzzy concentration, and the membership degree of fuzzy granular-balls was defined and applied to the classifier SVM to propose fuzzy SVM, which improved the classification accuracy and efficiency \cite{xia2022granular}. In addition, granular-ball computing has now been effectively extended to various fields of artificial intelligence, developing theoretical methods such as granular-ball clustering methods \cite{xie2024mgnr}, granular-ball neural networks \cite{shuyin2023graph}, and granular-ball evolutionary computing \cite{xia2023granular}, significantly improving the efficiency, noise robustness, and interpretability of existing methods.

Most of the existing classic fuzzy rough set models are based on the finest granularity. The single-grained analysis mode makes most existing calculation methods inefficient and robust, and are very sensitive to noise, and noise samples often appear in classification data. Therefore, improving the robustness of fuzzy rough set models is of great significance for feature selection. Multi-granularity granular-ball computing is an important model method that has been developed in the field of granular-ball computing in recent years. This method can use ``granular-balls'' with different sizes to adaptively represent and cover the sample space, and perform learning based on the granular-balls as shown in Fig. \ref{GBfrs}(b). This paper proposes to integrate multi-granularity granular-ball computing into fuzzy rough set theory. The main contributions include three aspects:
\begin{itemize}
\item{Granular-ball computing is introduced into fuzzy rough sets, and granular-balls are used to replace sample points to propose a granular-ball fuzzy rough set (GBFRS) framework.}
\item{Within the GBFRS framework, we rigorously define the upper and lower approximations of granular-ball fuzzy rough sets,  and provide formal proofs for the related theorems, ensuring the theoretical soundness of the proposed model.}
\item{We design an algorithm process for the feature selection method based on the granular-ball fuzzy rough set. The experimental results on the UCI dataset are compared with those of existing fuzzy rough set methods, verifying the effectiveness of the proposed method.}
\end{itemize}
 
The rest of this paper is organized as follows: we introduce the concepts of fuzzy sets and the work related to granular-ball computing in section \ref{sec:related work}. Section \ref{sec3} details the granular-ball fuzzy set framework and the definition of fuzzy granular-ball. The experimental results and analysis are presented in Section \ref{sec:experiment}. Finally, some conclusions and future work are provided in Section \ref{sec6}.

\section{Related work}\label{sec:related work}

 In this section, we review some basic concepts of classic fuzzy rough sets \cite{chen2011novel, jensen2004fuzzy} and related content of granular-ball computing. 

\subsection{Basics of fuzzy rough set}

 Fuzzy rough sets can handle complex uncertainties, where two objects may have similar or identical condition attribute descriptions but belong to different decision-making classes. Assume $U$ is a domain of discourse, mapping $A \left ( \cdot  \right ): U \longrightarrow \left [ 0,1 \right ] $, then $A$ is called a fuzzy set on $U$.

\begin{definition}\label{de1}
	\setlength{\parindent}{2em}	Given $U = \{x_1, x_2,..., x_n\}$ is a set of samples and $A$ is a set of real-valued attributes. Any $a\in A$ can induce a fuzzy binary relation $R_a$ on $U$. If it satisfies, we say that $R_a$ is a fuzzy similarity relation	\\
	(1) Reflexivity: ${R}_a(x,x)=1,\forall x \in U$;	\\
	(2) Symmetry: ${R}_a(x,y)={R}_a(y,x)$, $\forall x,y\in U$.	\\
\end{definition}

Assume that $B \subseteq A$, and there exists $a\in B$. If 
$$R_{B} = \bigcap_{a \in B}R_{a}, $$
then, $R_B$ is a fuzzy similarity relation to $U$.

Suppose $D$ is a decision attribute dividing the sample set $U$ into distinct equivalence classes $U/D = \{D_1, D_2,..., D_l\}$. Then, $R_B$ can be represented by a fuzzy similarity matrix $R_{B} =\left ( r_{ij}^{B}  \right ) _{n\times n} $, where $0 \le r_{ij}^{B} \le 1, i, j = 1, 2, . . . , n$. The fuzzy similarity $r_{ij}^{B}$ is calculated by the general distance metric as follows:
 \begin{equation*} \label{up}
r_{ij}^{B} = 1-\frac{1}{\sqrt[p]{c} } \bigtriangleup _{p}^{B} \left ( x_{i}, x_{j}  \right ),  
 \end{equation*} 
where $B$ is a subset containing $m$ attributes, and the distance $\bigtriangleup_{p}^{B}$ is defined as $\bigtriangleup_{p}^{B} \left ( x_{i}, x_{j}  \right ) = \sqrt[p]{\sum_{k=1}^{ | B |} \left ( x_{ik} - x_{jk}  \right )^{p}}$. In this article, we set $p=2$, and the formula is Euclidean distance. Here, $c$ is a fixed distance parameter, which controls the value of $r_{ij}^{B}$ in the interval [0,1].

To deal with the uncertainty of decision-making attributes, fuzzy similarity relationships are used in fuzzy rough sets to describe the degree of similarity between two objects. The upper and lower approximations of any fuzzy set are defined as:
\begin{definition} 
	\setlength{\parindent}{2em}	The classic fuzzy rough set introduces the concepts of lower and upper approximation as
 \begin{equation*} \label{up}
    \begin{aligned}
\overline{R_{B}} \left ( D_{j} \right )  \left ( x \right ) &=   \max_{y\in  D_{j} } R_{B}\left ( y,x \right ), x \in U, \\
\underline{R_{B}}  \left ( D_{j} \right )  \left ( x \right ) &= \min_{y\notin  D_{j} } \left \{ 1 - R_{B}\left ( y, x \right ) \right \} , x \in U,
\end{aligned}
 \end{equation*} 
where $\overline{R_{B}} \left ( D_{j} \right )$ and $\underline{R_{B}} \left ( D_{j} \right )$ are called the fuzzy lower and upper approximations of $D_j \in \{D_1, D_2,..., D_l\}$, respectively.  
\end{definition}
$\underline{R_{B}} \left ( D_{j} \right ) \left ( x \right )$ represents the degree to which $x$ definitely belongs to $D_j$, which is equal to the minimum difference between $x$ and all samples from the sample domain $U-D_j$. And $\underline{R_{B}}  \left ( D_{j} \right )  \left ( x \right )$ is the degree to which $x$ may belong to $D_j$, which is the maximum similarity between $x$ and all samples in $D_j$.

The traditional fuzzy positive structure domain is defined as $POS_B (D) =\bigcup_{D_j \in U/D} \underline{R_{B}}(D_j)$. According to the definition of fuzzy positive domain, the fuzzy dependence function can be calculated using the following formula:
\begin{equation} \label{dependence}
\partial _{B} \left ( D \right ) = \frac{ POS_{B} \left ( D \right ) }{\left | U \right | },
 \end{equation} 
where $\left |  \cdot  \right | $ denotes the cardinality of a set. Fuzzy dependence is defined as the ratio of the size of the positive region to the total number of samples in the sample space. It serves to assess the significance of a subset of attributes.

In the implementation of the fuzzy rough reduction method, the focus is primarily on the fuzzy members of (fuzzy) equivalence classes, forming what is termed a fuzzy rough set. When all equivalence classes are crisp and contain no fuzzy members, these definitions reduce to traditional rough sets \cite{jensen2004fuzzy}. This implies that traditional rough set theory and algorithms can be directly applied to clear datasets. However, practical datasets often involve uncertainty and ambiguity, making fuzzy rough set theory more suitable for handling complex situations. This enhances the accuracy and reliability of data analysis and decision-making processes.


	\begin{figure*}[!h]
	\centering
	\includegraphics[width=5in]{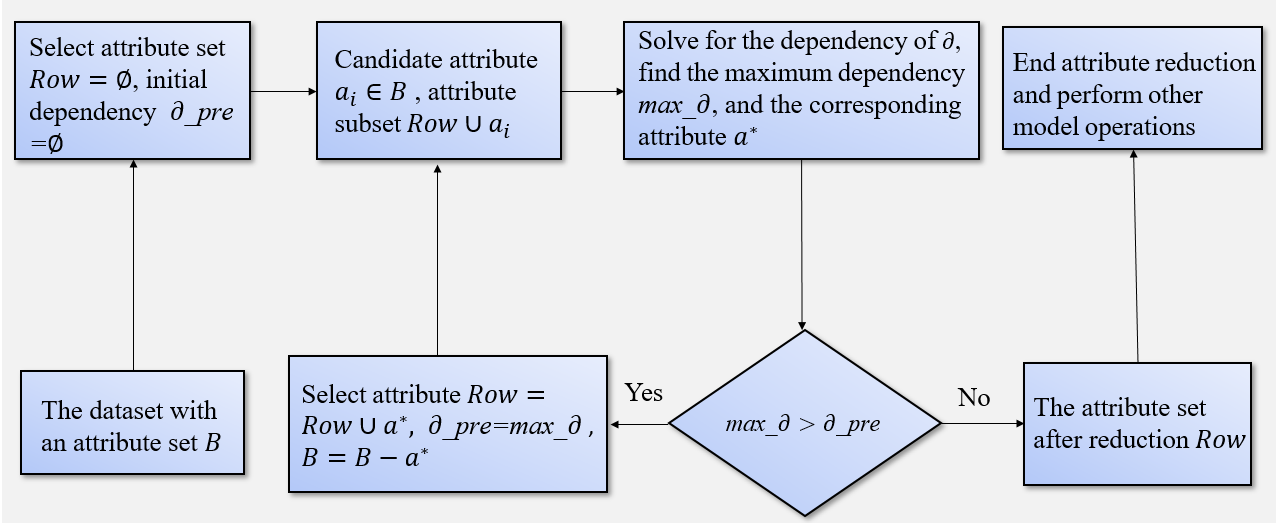}
	\caption{Attribute reduction process of fuzzy rough sets.}
	\label{Attribute}
\end{figure*} 

The lower approximation plays a crucial role in rough set theory, describing the uncertainty of decision categories under given conditions. Calculating lower approximations helps identify data relevant to the target variable under specific conditions, measuring the discriminative ability of features. The features that obtain a larger lower approximation indicate that they can effectively distinguish different decision categories, highlighting their importance in the feature selection process. In attribute selection, fuzzy rough sets evaluate the importance of each attribute through lower approximation. The attribute reduction process of fuzzy rough sets is shown in Fig. \ref{Attribute}. First, the data is preprocessed, including normalizing continuous attributes and fuzzifying discrete attributes. Next, an appropriate similarity measure calculates a similarity matrix for each attribute. Then, for each class, a fuzzy lower approximation is calculated for each object, which depends on the object's similarity to its nearest neighbors and their class membership. Finally, the importance of each attribute is evaluated by the consistency between the lower approximation and the class label, and the most important attributes are selected according to the ranking of contribution value. This process effectively filters out the most contributing attributes, enhancing classification accuracy and model interpretability. Most existing fuzzy rough set methods rely on sample points, which can be susceptible to noise, impacting both upper and lower approximation calculations and data analysis.

\subsection{Granular-ball computing}

 The granular-ball computing method primarily utilizes granular-balls of varying sizes to replace fine sample points in constructing a multi-granularity dataset that describes the distribution of the original dataset. The choice of using balls is mainly due to their concise and unified mathematical model expression in any dimensional space, requiring only the center and radius for representation.

	\begin{figure}[!h]
		\centering
		\includegraphics[width=1.5in]{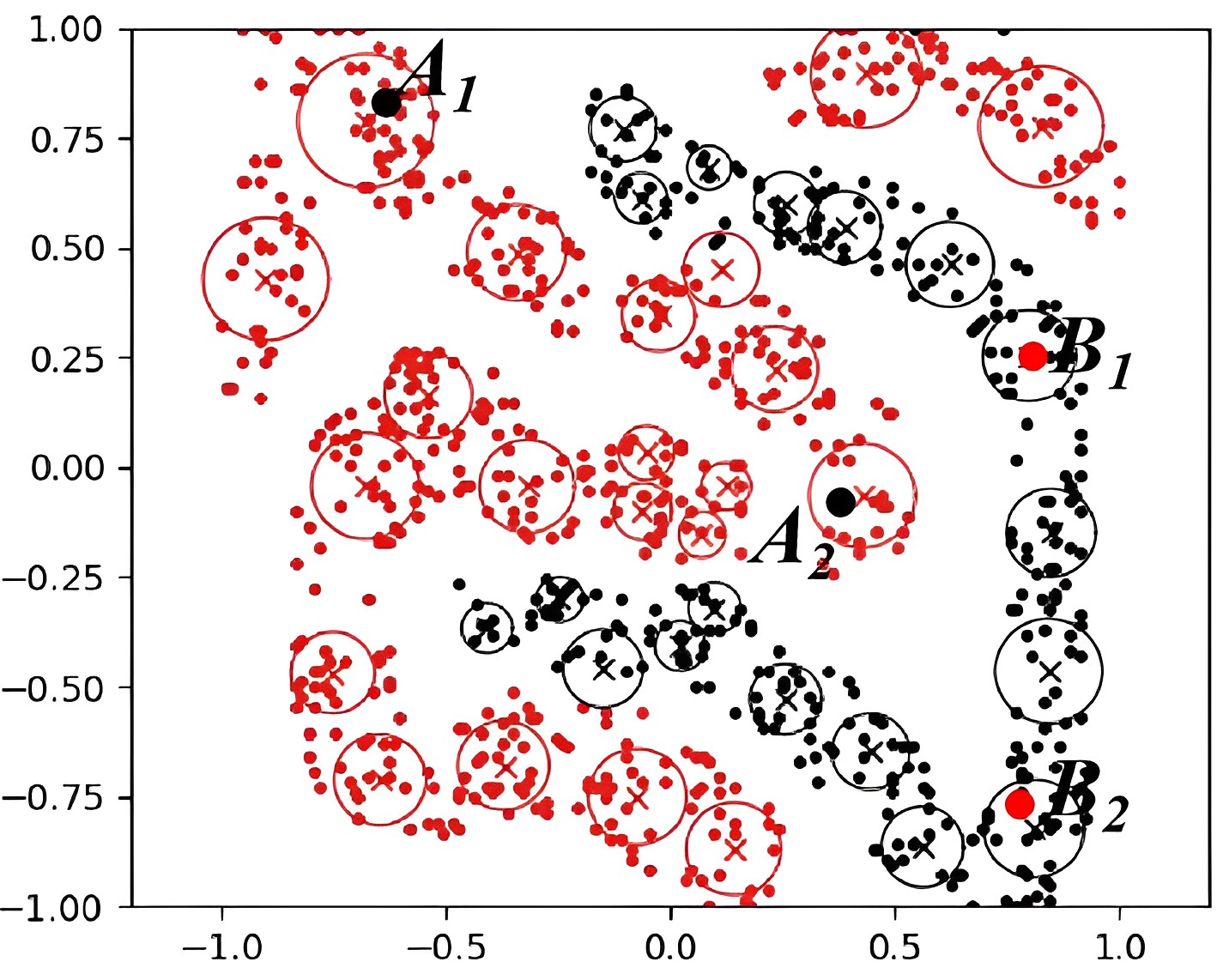}
		\includegraphics[width=1.5in]{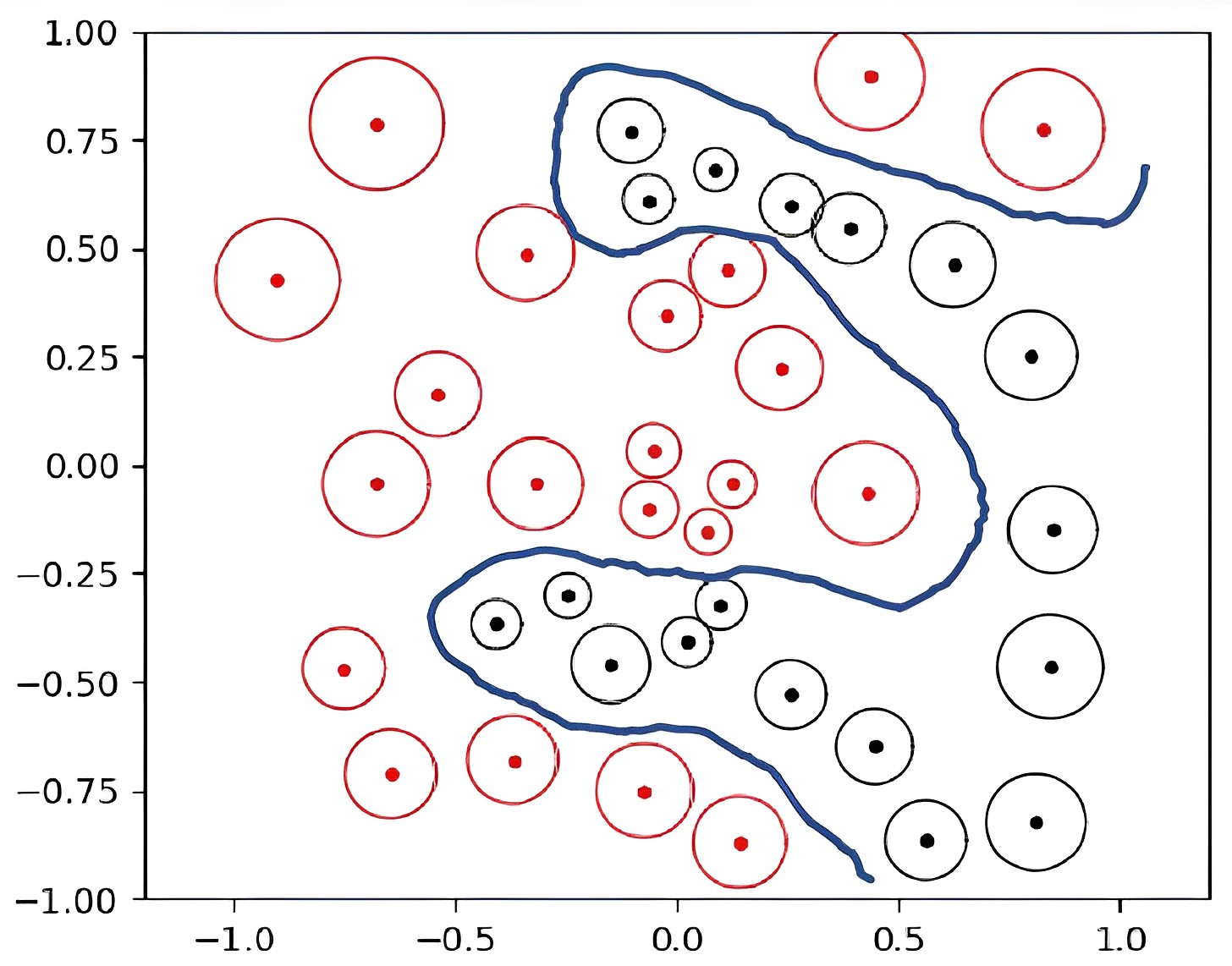}\\
		(a) \ \ \ \ \ \ \ \ \ \ \ \ \ \ \ \ \ \ \ \ \ \ \ \ \ \ \ \ (b)
		\caption{Take the data set ``fourclass'' as an example, the result of granular-ball splitting. (a) The granular-balls cover the sample set; (b) The granular-ball decision boundary is consistent with the original data.}
		\label{generateGB}
	\end{figure} 

Taking the classification problem as an example,  the representation and computation of granular-ball coverage are illustrated in Fig. \ref{generateGB}. Fig. \ref{generateGB}(a) shows the particle coverage result on the 'fourclass' dataset, while Fig. \ref{generateGB}(b) depicts the result after retaining the granular-balls. The boundaries of the balls align well with the sample boundaries, with both balls and samples marked with corresponding classification labels. The two colors represent two types of labels. Obviously, Fig. \ref{generateGB}(a) contains thousands of sample points, whereas Fig. \ref{generateGB}(b) shows nearly 50 granular-balls. This reduction by more than 20 times demonstrates the efficiency of granular-ball computing. Additionally, some red granular-balls contain a small number of black points, indicating label noise characteristics like $A_1$ and $A_2$. However, these black points do not affect the labels of coarse-grained granular-balls, highlighting the method's robustness. The granular-ball computing model can be defined as follows.

Given a data set $D = \{x_i, i = 1, 2, . . . , n\}$, where $x_i$ and $n$ represent the samples and the number of samples in D. By using granular-ball computing, a granular-ball set $G= \{GB_j, j =1, 2, . . . , k\}$ is generated, and $k$ denotes the number of all balls generated on $D$. The standard model for Multi-granularity granular-ball computing is as follows:

	\begin{equation}\label{GBmodel}
		\left\{\begin{aligned}
			&\quad \quad f(x,\vec{\alpha}) \longrightarrow  g(GB,\vec{\beta})\\
			&s.t.\quad {\mathop{\min}  } \ \  \frac{n}{\sum_{j = 1}^{k}  \left| GB_{j} \right|} +  k + loss(GB) , \\
			& \quad \quad quality (GB_{j}) \geq T, j=1,2,...,k, \\
		\end{aligned}\right.
	\end{equation}
where $\left| \cdot \right|$ denotes the number of samples in a set, i.e. the number of samples in a granular-ball. The function $f$ represents the original classification learning model that takes points $x$ as input, while function $g$ stands for the classification learning model that utilizes granular-balls $GB$ as input. $\vec{\alpha}$ and $\vec{\beta}$ denote the parameter vectors. In addition, the purity threshold $T$ controls the minimum quality of all balls. The model's constraints primarily govern granular-ball generation. The first term of the first constraint reflects the reciprocal of sample coverage by granular-balls. Greater coverage generally reduces information loss, maintaining learning performance. The second term controls the number of granular-balls; fewer balls enhance computational efficiency and robustness. The third term applies when granular-balls are used in methods like deep learning, necessitating quality control in loss functions via metric learning, due to changes in sample point representation. The second constraint is that the mass of all granular-balls must meet the threshold requirement, which controls the minimum granularity size of the granular-balls.

	\begin{figure*}[!h]
		\centering
		\includegraphics[width=5in]{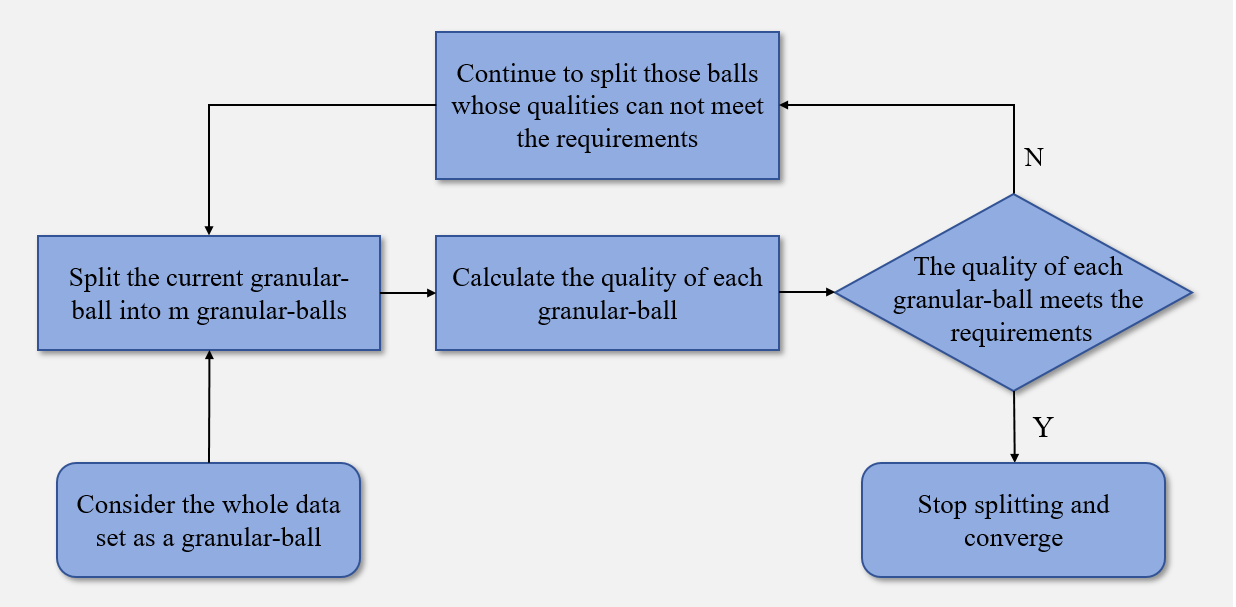}
		\caption{The flowchart of granular-balls generation.}
		\label{GB}
	\end{figure*}

The process of granular-ball generation is illustrated in Fig. \ref{GB}. Initially, the entire dataset is represented as a single granular-ball, which typically fails to meet the required purity standards. Subsequently, this ball undergoes division into m sub-balls using the k-means algorithm. The combined mass of these sub-balls is then evaluated to ascertain if they collectively satisfy the quality criteria. If not, the subdivision process continues iteratively until each granular-ball fulfills the minimum quality standards. Although the theory of granular-ball computing has not been proposed for a long time, a series of methods have been developed to address problems such as efficiency, robustness, and interpretability in various fields of artificial intelligence.

\section{Granular-ball fuzzy rough set}\label{sec3}

\subsection{Motivation}

Most of the existing fuzzy rough set methods are aimed at sample points and need to calculate similarity matrices and upper and lower approximations between all objects, involving a large number of distance calculations and matrix operations, especially on large-scale data sets, which will lead to computational complexity and time costs increase significantly. Fuzzy rough sets are also sensitive to noise points in the data, affecting the accuracy of lower approximations and, consequently, attribute importance assessment.

\begin{figure}[!h]
		\centering
		\includegraphics[width=3in]{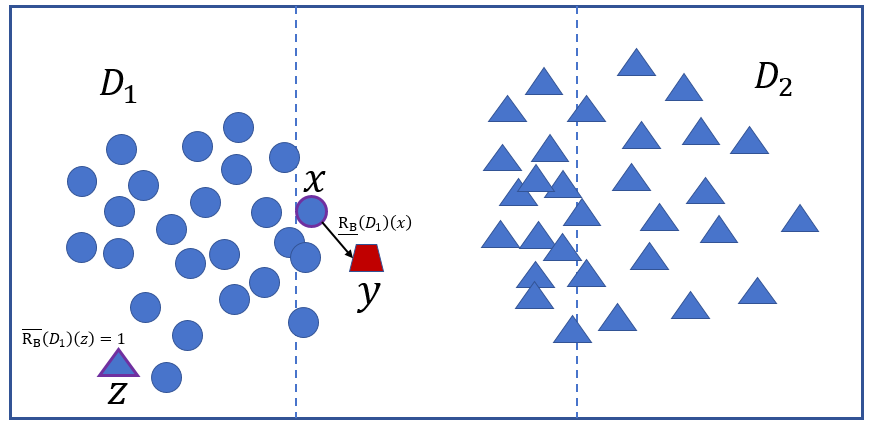}
		\caption{The illustration is that traditional fuzzy rough sets are not robust.}
		\label{frs}
	\end{figure}
 
As an example of the lower approximation, Fig. \ref{frs} shows the effect of noise points in the fuzzy rough set on the lower approximation. For a point $x$, if we want to determine the degree to which it belongs to a certain class $D_i$, it depends on the nearest heterogeneous boundary to it. In the figure, $x$ is located within class $D_1$, but there is a noise point $y$ near it. Since point $y$ is very close to $x$, it may be mistaken as the nearest heterogeneous boundary of $x$, thus affecting the lower approximation of $x$. Similarly, point $z$ actually belongs to class $D_2$, but its location is close to class $D_1$, so it is likely to be regarded as a noise point. Although point $z$ has an upper approximation of 1 (i.e. very much belongs to class $D_2$) because its nearest point is itself, it may actually belong to class $D_1$. The upper and lower approximations of fuzzy rough sets are values, while the upper and lower approximations of rough sets are set. $x$ represents an arbitrary point, and $y$ is a certain approximate point. According to the formula, the lower approximation of $x$ is the point closest to $x$ and does not belong to the $D_i$ category. In this case, noise points such as $y$ will affect the lower approximate calculation result of $x$, thereby affecting the accuracy of $x$ classification. We can see the important impact of noise points on the approximation under fuzzy rough sets, especially when calculating the degree to which a point belongs to a certain class.

\begin{figure}[!h]
		\centering
		\includegraphics[width=3in]{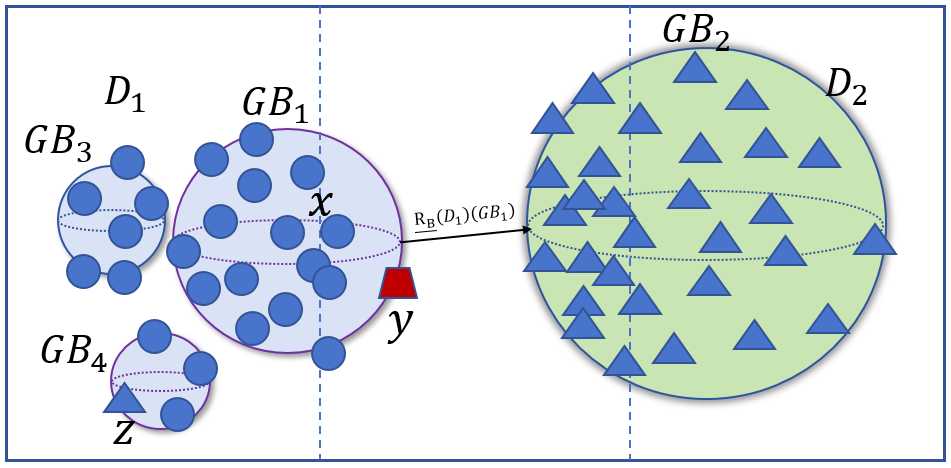}
		\caption{Schematic diagram of the robustness of granular-ball fuzzy rough sets.}
		\label{GBfrs2}
	\end{figure}

To this end, we consider introducing granular ball calculation in fuzzy rough sets, using granular-balls of different granularities as units instead of points, as shown in Fig. \ref{GBfrs2}. The label of a granular-ball is defined as the label that appears most frequently in a granular-ball. The blue balls $GB_1, GB_3, GB_4$ belong to class $D_1$, and the green balls $GB_2$ belong to class $D_2$. At this time, we analyze the granular-balls as units and solve the upper and lower approximations so that the blue granular-balls are not affected by the noise points $y$ and $z$.


\subsection{Granular-ball Fuzzy Rough Set}

GBFRS introduces granular-ball computing into fuzzy rough sets, so it must follow the granular-ball calculation model (\ref{GBmodel}). Each granular-ball consists of two parameters: center and radius, which are defined as:
\begin{definition}\label{defc}
A granular-ball $GB = \{x_i,i = 1, ... , N\}$, where $x_i$ denotes the samples in $GB$ and $N$ is the number of samples in $GB$. $GB's$ center $c$ and radius $r$ get values by:
$$
c = \frac{1}{N} \sum_{i = 1}^{N} x_i; \ \ \ r = \frac{1}{N} \sum_{i = 1}^{N}  \| x_i - c  \|,
$$	
where $\|x_i - c \|$ denotes the Euclidean distance from $x_i$ to $c$.
\end{definition}

The quality of a granular-ball is assessed based on its ``purity'', defined as the proportion of samples belonging to the dominant category within the ball. For instance, if a granular-ball contains 100 samples categorized into two classes, with 80 samples labeled as `+1' and 20 samples labeled as `-1', the purity is determined by the ratio of the dominant category's samples to the total samples within the ball. In this example, the label `+1' represents the majority with 80 samples out of 100, resulting in a purity of 0.8, thus assigning the label `+1' to the granular-ball. Granular-balls replace sample points to describe the entire data set, and it is necessary to ensure that the quality of each ball reaches the requirements.

 
 The structure of the data set for classification learning is represented as $\left\langle U, A, D \right\rangle$, where $U$ is a non-empty set of samples, and $A$ is a set of real-valued attributes. An attribute $a\in A$ is a mapping defined on $U$, that is, $a:U\rightarrow R^+\cup \{ 0 \}$. $R$ is the field of real numbers. $D$ is the decision attribute. Suppose there is a data set $U=\left\{x_i,i=0,1,...,n\right\}$, and convert all sample points into a set of granular-balls as $G=\left\{{GB}_i,i=0,1,...,m\right\}$, then the sample space is transformed into ${U}'=G$. The fuzzy similarity relationship based on granular-balls is as follows: 
\begin{definition}\label{def1}
	\setlength{\parindent}{2em}	Assume there is $\left\langle U,A,D \right\rangle$, converted into a set of granular-balls as ${U}'= \left\{{GB}_i,i=0,1,...,m\right\}$. The center and radii of ${GB}_i$ are denoted as $c_i$ and $r_i$ respectively. For any attribute a in $A$, a fuzzy binary relationship ${GBR}_a$ can be derived. If the following conditions are met, ${GBR}_a$ is said to be a fuzzy similarity relationship.	\\
	(1) Reflexivity: ${GBR}_a({GB}_i,{GB}_i)=1,\forall{GB}_i\in G$;	\\
	(2) Symmetry: ${GBR}_a({GB}_i,{GB}_j)={GBR}_a({GB}_j,{GB}_i)$, $\forall{GB}_i,{GB}_j\in G$.	\\
	The fuzzy similarity here is calculated through the general distance metric as follows:
	\begin{equation}\label{GBR}
	 {GBR}_a\left({GB}_i,{GB}_j\right) = 1- \frac{1}{\sqrt[p]{C}}\mathrm{\Delta}_p^a\left(c_i,c_j\right),
    \end{equation}
	where $c_i$ and $c_j$ are the centers of $GB_i$ and $GB_j$ respectively. $C$ is a fixed distance parameter that places the value of ${GBR}_a$ in the interval $[0.1]$. 
\end{definition}

In this article, we set $p=2$, then $\mathrm{\Delta}_2^a(c_i,c_j)$ represents the Euclidean distance between the centers $c_i$ and $c_j$ in the case of attribute $a$. If $B\subseteq A$, ${GBR}_B$ is a fuzzy similarity relationship under attribute subset $B$, expressed as ${GBR}_B{=(GBr_{ij}^B)}_{m\times m}$, where $0\le{GBr}_{ij}^B\le1,i,j=1,2,\ldots,m$. $B$ is an attribute subset. Here, $C$ is a fixed distance parameter: the number of attribute sets $B$.
	
This article sets the fuzzy similarity relationship to 1 only if the center points overlap and have the same radius. For ring-balls (a large ball envelops a small ball), two factors affect the fuzzy similarity relationship's membership degree: parameter $C$ and attribute subset $B$. For a given parameter $C$, as the number of attributes in $B$ increases, the degree of membership becomes smaller.

\begin{property}\label{1}
Let $B\subseteq A$, then ${GBR}_A\subseteq{GBR}_B$.
\end{property}
\begin{proof}
    Given a decision table $\left \langle {U}', A, D \right \rangle$ and sample space ${U}'= \left \{ GB_{i},i=1,2,...,m  \right \}$. Assume that the entire domain is divided into $l$ clear decision equivalence classes, denoted as ${U}'/D =  \left \{ D_{1},D_{2},...,D_{l} \right \}$. For $\forall GB_{i}, GB_{j} \in G$, we have $GB_{i} = \left \{ x_i,i=1,2,...,n_i \right \}, GB_{j} = \left \{ x_j,j=1,2,...,n_j \right \}$. Under the condition of attribute $B$, let $d = \left |B  \right |$, then $x_{i} = \left \{x_{i1},x_{i2},...,x_{id}  \right \}, x_{j} = \left \{x_{j1},x_{j2},...,x_{jd}  \right \}$. At this time, the centers of $GB_{i}$ and $GB_{j}$ containing attribute $B$ are
    \begin{align}
        c_{i} = \frac{1}{n_{i}}\sum_{i=1}^{n_{i}} x_{i} =\left ( c_{i1}, c_{i2},...,c_{id} \right ) , \\
        c_{j} = \frac{1}{n_{j}}\sum_{j=1}^{n_{j}} x_{j}=\left ( c_{j1}, c_{j2},...,c_{jd} \right ).
    \end{align}
    Similarly, assuming ${d}' = \left | A \right |$, because of $B\subseteq A$, so $d 
\le {d}'$. We have ${x_{i}}' = \left \{x_{i1},x_{i2},...,x_{id},...,x_{i{d}'} \right \}, {x_{j}}'  = \left \{x_{j1},x_{j2},...,x_{jd},...,x_{j{d}'}  \right \}$. Under attribute $A$, the centers of $GB_{i}$ and $GB_{j}$ are 
    \begin{align}
        {c_{i}'} = \frac{1}{n_{i}}\sum_{i=1}^{n_{i}} {x_{i}'} = \left ( c_{i1}, c_{i2},...,c_{id},...,{c_{i{d}'}'} \right ) ,\\   {c_{j}'} = \frac{1}{n_{j}}\sum_{j=1}^{n_{j}} {x_{j}'} = \left ( c_{j1}, c_{j2},...,c_{jd},...,{c_{j{d}'}'} \right ) .
    \end{align}
Therefore, the distances between the centers under the two attribute conditions are
\begin{equation*}
   \begin{aligned}
&\bigtriangleup ^{B} \left ( c_{i}, c_{j}\right ) \\ 
&= \sqrt{\left (  c_{i1}- c_{j1} \right )^{2} +  \left (  c_{i2}- c_{j2} \right )^{2}+...+\left (  c_{id}- c_{jd} \right )^{2}}; \\
& \bigtriangleup ^{A} \left ( {c_{i}}', {c_{j}}'\right ) \\
&= \sqrt{\left (c_{i1}- c_{j1} \right )^{2} +...+\left (c_{id}-c_{jd} \right )^{2}+...+\left (c_{i{d}'}- c_{j{d}'} \right )^{2}} .
   \end{aligned}
\end{equation*}
Therefore,
\begin{equation*}
\bigtriangleup ^{B} \left ( c_{i}, c_{j}\right ) \le \bigtriangleup ^{A} \left ( {c_{i}}', {c_{j}}'\right ).
\end{equation*}
That is, $r_{ij}^{B} \ge r_{ij}^{A}$, so $GBR_{A} \le GBR_{B}$.

\end{proof}

\begin{definition}\label{def2}
	\setlength{\parindent}{2em} Given a decision table $\left\langle {U}',A,D\right\rangle$ and ${U}'= \left \{ GB_{i},i=1,2,...,m  \right \}$. Assume that the whole domain is divided by $D$ into $l$ clear decision equivalence classes, expressed as $U'/D= \left \{ D_1,\ D_2,\ldots,D_l \right \}$. ${GBR}_B$ is the fuzzy similarity relationship on ${U}'$ derived from equation (\ref{GBR}) under the attribute subset $B$. For $\forall GB_i,GB_j\in U'$, the upper and lower approximations of decision class $D_j$ with respect to $B$ are defined as
\begin{equation}
\begin{aligned}
\underline{{GBR}_B}\left(D_j\right)\left(GB_j\right)&=\min_{GB_i\notin D_j} \left \{ 1 - GBR_B(GB_i, GB_j)\right \}, \\
&\ \ \ \ \ \ \ \ \ \ \ \ \ \ \ \ \ \ \ \ \ \ \ \ \ \ \ \ \ \ \  \ \ GB_j \in {U}' ,\\
\overline{{GBR}_B}(D_j)(GB_j)&=\max_{GB_i\in D_j} \left \{ GBR_B(GB_i, GB_j) \right \},  \\
&\ \ \ \ \ \ \ \ \ \ \ \ \ \ \ \ \ \ \ \ \ \ \ \ \ \ \ \ \ \ \  \ \ GB_j \in {U}',
\end{aligned}   
\end{equation}
where the lower approximation of $D_j$ is also called the positive domain of $D_j$.  
\end{definition}

Convert the upper and lower approximations of fuzzy similarity relationships into upper and lower approximations of distance, then the upper and lower approximations of decision class $D_j$ with respect to $B$ are defined as:
\begin{align}
\underline{{GBR}_B}(D_j)(GB_j)&=\frac{1}{\sqrt C}\min_{GB_i\notin D_j}{\mathrm{\Delta}^B(c_i,c_j)}, GB_j \in {U}' \label{low}\\
\overline{{GBR}_B}(D_j)(GB_j)&=\max_{GB_i\in D_j} \left \{1-\frac{1}{\sqrt C}\mathrm{\Delta}^B(c_i,c_j)\right \} , GB_j \in {U}' . 
\end{align} 

It can be easily seen that the lower approximate membership of object $GB_j$ to $D_j$ is based on the nearest samples belonging to other decision equivalence classes. The degree of membership is affected by the distance parameter $C$ and the attribute subset $B$ for a given distance parameter $C$, the greater the number of attributes in $B$, the greater the membership degree of downward approximation.

Based on the granular-ball fuzzy lower approximation, the granular-ball fuzzy positive domain of decision class $D$ concerning $B$ can be expressed as
\begin{equation}
GBPOS_B(D)=\bigcup_{i=1}^{l}{\underline{{GBR}_B}(D_i)}. 
\end{equation}


When the sample points are replaced by granular-balls, the sample space $U$ is transformed into $U'$. According to the definition of the fuzzy positive domain based on granular-balls, the granular-ball fuzzy dependence function can be calculated using the following formula:
\begin{equation}
{{W\partial}_B}^{'}(D)\mathrm{=} \frac{\sum_{i=1}^{m}{GBPOS_B(D)(GB_i)}}{\left|U'\right|} .
\end{equation}

Here, $\left | {U}'  \right |$ represents the number of granular-balls. However, the efficacy of the fuzzy dependency function hinges upon the quality of these granular-balls. To address this, we propose weighting the fuzzy dependency function based on the proportion of each granular-ball. Consequently, the weighted granular-ball fuzzy dependency function is defined as follows:
\begin{definition}\label{wgbfd}
Assuming that the original sample space is $U$, the space after the granular-balls are generated is ${U}'$, $GBPOS_B(D)(GB_i)$ is the granular-ball fuzzy positive domain, then the weighted granular-ball fuzzy dependency function is
\begin{equation}\label{GBdependency}
   \begin{aligned}
 {W\partial}_B(D) & \mathrm{=} \frac{\left|GB_1\right|}{\left|U\right|} \times GBPOS_B(D)(GB_1) + \\
&  \frac{\left|GB_2\right|}{\left|U\right|}\times GBPOS_B(D)(GB_2)+  \\ & ...+ \frac{\left|GB_m\right|}{\left|U\right|} \times GBPOS_B(D)(GB_m) \\ 
& \mathrm{=}\frac{\sum_{i=1}^{m}{ \{ \left|GB_i\right|\times GBPOS_B(D)(GB_i)} \} }{\left|U\right| } .
   \end{aligned}
\end{equation}
\end{definition}
The weighted fuzzy dependency function preserves the information from the original space sample points, ensuring a comprehensive representation. 

 \begin{theorem}\label{the}
When each granular-ball contains only one sample point, the weighted granular-ball fuzzy dependency function aligns with the traditional fuzzy dependency function.
 \end{theorem}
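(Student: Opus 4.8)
The plan is to show that under the single-point hypothesis every granular-ball construct collapses to its classical point-based counterpart, so that the weighted dependency formula (\ref{GBdependency}) reduces term by term to (\ref{dependence}). I would carry this out in three stages: first degenerate the geometry of the balls, then degenerate the similarity relation together with its approximations, and finally substitute into the weighted sum.

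First I would establish the geometric degeneracy. If each granular-ball $GB_i$ contains exactly one sample $x_i$, then by Definition~\ref{defc} its center satisfies $c_i = x_i$ and its radius is $r_i = 0$. Consequently the number of balls equals the number of samples, $m = |U|$, and there is a bijection between $G$ and $U$ identifying $GB_i$ with $x_i$. Second, I would push this identification through the similarity relation and the approximations. Substituting $c_i = x_i$ into (\ref{GBR}), and taking the granular-ball distance constant $C$ equal to the classical constant $c$ (both being $|B|$), gives $GBR_a(GB_i,GB_j) = R_a(x_i,x_j)$, hence $GBR_B = R_B$. Feeding this equality into the lower approximation of Definition~\ref{def2} then yields $\underline{GBR_B}(D_j)(GB_j) = \underline{R_B}(D_j)(x_j)$ for every decision class, whence $GBPOS_B(D)(GB_i) = POS_B(D)(x_i)$, the membership contributed by the single point $x_i$.

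Finally, I would substitute $|GB_i| = 1$ and $m = |U|$ into (\ref{GBdependency}):
\begin{equation*}
W\partial_B(D) = \frac{\sum_{i=1}^{m} |GB_i|\, GBPOS_B(D)(GB_i)}{|U|} = \frac{\sum_{i=1}^{|U|} POS_B(D)(x_i)}{|U|},
\end{equation*}
which is exactly $\partial_B(D)$ of (\ref{dependence}) once the fuzzy positive region $POS_B(D)$ is read as the sum of its membership values over $U$.

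The main obstacle I anticipate is not computational but notational: the traditional formula (\ref{dependence}) writes $POS_B(D)$ in the numerator without an explicit summation, whereas the weighted formula carries an explicit $\sum_i$. The crux of the argument is therefore to make precise that $POS_B(D)$ denotes the cardinality $\sum_{x \in U} POS_B(D)(x)$ of the fuzzy positive region, so that the two numerators genuinely coincide. Once this reading is fixed, all the weighting factors $|GB_i|/|U|$ collapse to $1/|U|$ and the identity is immediate; the real content of the theorem lies entirely in steps one and two, where the geometric collapse forces $GBR_B = R_B$ and hence the equality of the two positive regions.
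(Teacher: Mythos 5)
Your proposal is correct and follows essentially the same route as the paper's proof: direct substitution of $m=n$ and $|GB_i|=1$ into the weighted dependency (\ref{GBdependency}) to recover the traditional dependence (\ref{dependence}). In fact you are somewhat more complete than the paper, whose proof silently equates $GBPOS_B(D)(GB_i)$ with $POS_B(D)(x_i)$ in a single line, whereas your first two stages ($c_i=x_i$ and $r_i=0$ via Definition~\ref{defc}, hence $GBR_B=R_B$ once the ball constant $C$ is identified with the classical constant $c$, hence equality of the lower approximations and positive regions) supply exactly the justification that omitted step requires.
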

\begin{proof}
If each granular ball consists of only one sample point, then the number of granular-balls, 
$m$, is equal to the number of sample points in the dataset, $n$, and $ \left| GB i  \right| =1$. Consequently, we have 
\begin{equation}\label{GBdependency_1}
	\begin{aligned}
		{W\partial}_B(D) & \mathrm{=} \frac{\left|GB_1\right|}{\left|U\right|} \times GBPOS_B(D)(GB_1) + \\
		&  \frac{\left|GB_2\right|}{\left|U\right|}\times GBPOS_B(D)(GB_2)+  \\ & ...+ \frac{\left|GB_m\right|}{\left|U\right|} \times GBPOS_B(D)(GB_m) \\ 
		& \mathrm{=}\frac{\sum_{i=1}^{m}{ \{ \left|GB_i\right|\times GBPOS_B(D)(GB_i)} \} }{\left|U\right| } \\
		&  \mathrm{=} \frac{\sum_{i=1}^{n}{ \{  1 \times GBPOS_B(D)(GB_i)} \} }{\left|U\right| } \\
		&  \mathrm{=}\frac{\sum_{i=1}^{n}{ \{ POS_B(D)(x_i)} \} }{\left|U\right| } = \partial _{B} \left ( D \right ) .
	\end{aligned}
\end{equation}
Then the weighted granular sphere fuzzy dependency function is the traditional point-based fuzzy dependency function. This consistency reinforces the validity of our definition relative to the traditional approach.

\end{proof}

According to Property \ref{1} and Definition \ref{wgbfd}, we can get:

\begin{property}\label{2}
Let $\left\langle U', A, D \right\rangle$ be the decision table, $B_1, B_2\subseteq A$, if $B_1\subseteq B_2$, then $GBPOS_{B_1}(D)\subseteq GBPOS_{B_2}(D)$.
\end{property}
\begin{proof}
Given a decision table $\left\langle U',A,D\right\rangle$, the sample space $U' =\left\{{GB}_i,i=1,2,...,m\right\}$. Let the whole domain be divided into $l$ clear decision equivalence classes $ U'/D = \{D_1,\ D_2,\ldots,D_l \}$. For $\forall GB_{i}, GB_{j} \in G$, $c_{i},c_{j}$ are the centers of the two balls respectively. Since $B_{1}, B_{2}\subseteq A$, $B_{1}\subseteq B_{2}$, according to Property \ref{2}, we have $\bigtriangleup ^{B_1} \left ( c_{i}, c_{j}\right ) \le \bigtriangleup ^{B_2} \left ( c_{i}, c_{j}\right ) $. Then,
\begin{equation*}
\frac{1}{\sqrt{C} } \min_{GB_{i}\notin D_{j} } \left \{  \bigtriangleup ^{B_1}  \left ( c_i, c_j \right )  \right \} \le \frac{1}{\sqrt{C} } \min_{GB_{i}\notin D_{j} } \left \{  \bigtriangleup ^{B_2}  \left ( c_i, c_j \right )  \right \}.
\end{equation*}
Therefore,
\begin{equation*}
\begin{aligned}
 {\textstyle \bigcup_{i=1}^{l}} \underline{GBR_{B_1}} \left ( D_i \right ) &\subseteq {\textstyle \bigcup_{i=1}^{l}} \underline{GBR_{B_2}} \left ( D_i \right ) \\
 GBPOS_{B_1}\left ( D \right ) &\subseteq  GBPOS_{B_2}\left ( D \right ).
\end{aligned}
\end{equation*}
\end{proof}

Thus, we have the following theorem:

\begin{property}\label{3}
 Let $\left\langle U',A,D\right\rangle$ be the decision table, $B_1,B_2\subseteq A$, if $B_1\subseteq B_2$, then ${W\partial}_{B_1}(D)\le{W\partial}_{B_2}(D)$.
\end{property}
Property \ref{3} shows that the granular-ball fuzzy dependency is monotonically increasing as the size of the attribute subset changes.

\subsection{Feature selection of the Granular-ball Fuzzy Rough Set}
The lower approximation of fuzzy rough sets is often employed for feature selection, effectively eliminating irrelevant or redundant features, thereby mitigating noise and interference, enhancing model performance, and bolstering generalization ability. This is primarily attributed to the ability of the lower approximation of fuzzy rough sets to furnish valuable insights into feature relationships. We can pinpoint the most pertinent and impactful feature subsets by computing the lower approximation and subsequently leveraging fuzzy dependencies to assess the significance of each feature in the target variable. This approach facilitates a deeper understanding of underlying data patterns and regularities while augmenting the accuracy and efficiency of feature selection. In comparison with conventional methods, the lower approximation of fuzzy rough sets affords a more holistic consideration of feature relationships, thereby enabling more effective feature selection. The utility of an attribute or attribute subset hinges upon its relevance and redundancy. Therefore, we present the following definition:

\begin{definition}\label{def7}
	\setlength{\parindent}{2em} Let $\left\langle U',A,D \right\rangle$ be the decision table, $B\subseteq A$. For $\forall a\in B$, if ${W\partial}_{B-a}(D)\neq{W\partial}_B(D)$, attribute $a$ is indispensable in $B$. Otherwise, $a$ is redundant in $B$.
\end{definition}

After the above definition, we introduce the next definition, which will further clarify what a simplification of an attribute set is. In this definition, we will evaluate whether the attribute subset $B$ is a simplification of the attribute set $A$, that is, whether it has the same fuzzy rough dependency as $A$.

\begin{definition}\label{def4}
	\setlength{\parindent}{2em} Let $\left\langle U', A, D \right\rangle$ be the decision table, $B\subseteq A$. If the following definition is satisfied, then $B$ is a reduction of $A$.
\begin{align}
	  {W\partial}_{B-a}\left(D\right)& <{W\partial}_B\left(D\right), \forall a\in B, \label{13}   \\
	 {W\partial}_B(D)&={W\partial}_A(D).
\end{align}
\end{definition}
This definition states that a reduction is a minimal subset of conditional attributes with the same fuzzy rough dependencies as the entire set of attributes. According to Definition \ref{def7} and \ref{def4}, we prove the monotonicity of the dependency function using an increasing sequence of numbers as the asymptotic value of the parameter $C$.
\begin{theorem}\label{the1}
    Let $\left\langle U', A, D\right\rangle$ be the decision table, and $B\subseteq A$. A single increment sequence $\left \{ C_i:i=1,2,\dots,q \right \} $ is expressed as the range of values of the distance parameter $C$. When $C=C_i$, the fuzzy dependence of $D$ on $B$ is denoted by ${W\partial}^{i}_B(D)$. When $C=C_{i+1}$, the fuzzy dependence of $D$ on $B$ is denoted by ${W\partial}_B^{i+1}(D)$, then 
\begin{equation} 
	{W\partial}_{B}^{i+1}(D)=\sqrt{\frac{C_{i}}{C_{i+1}}}{W\partial}_{B}^{i}(D).
\end{equation}
\end{theorem}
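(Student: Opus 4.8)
The plan is to isolate the dependence on the distance parameter $C$, observing that $C$ enters every quantity only through the common prefactor $1/\sqrt{C}$, while all remaining ingredients — the center distances, the minimization over heterogeneous balls, the union over decision classes, and the granular-ball weights — are completely independent of $C$. Once this factorization is established, the claimed identity follows by simply evaluating the dependency function at $C_i$ and at $C_{i+1}$ and dividing.

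First I would start from the lower approximation in its distance form (\ref{low}), namely $\underline{GBR_B}(D_j)(GB_j)=\frac{1}{\sqrt{C}}\min_{GB_i\notin D_j}\Delta^B(c_i,c_j)$. Since the centers $c_i,c_j$, and hence the distance $\Delta^B(c_i,c_j)$, do not depend on $C$, the entire $C$-dependence of this expression is the scalar $1/\sqrt{C}$. Because this scalar is positive and constant over the optimization index $GB_i$, it may be pulled outside the $\min$, giving $\underline{GBR_B}(D_j)(GB_j)=\frac{1}{\sqrt{C}}\,\delta_j$, where $\delta_j=\min_{GB_i\notin D_j}\Delta^B(c_i,c_j)$ is $C$-free.

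Next I would propagate this factorization through the positive domain $GBPOS_B(D)=\bigcup_{t=1}^{l}\underline{GBR_B}(D_t)$. Evaluated at a fixed $GB_j$, the union of fuzzy sets is the pointwise maximum over the decision classes, so $GBPOS_B(D)(GB_j)=\frac{1}{\sqrt{C}}\max_{1\le t\le l}\min_{GB_i\notin D_t}\Delta^B(c_i,c_j)$; again the positive constant $1/\sqrt{C}$ factors out of the $\max$, and the residual max–min of center distances carries no $C$-dependence. Substituting into the weighted dependency function (\ref{GBdependency}), which is a weighted sum with weights $|GB_i|/|U|$ that likewise do not involve $C$, the factor $1/\sqrt{C}$ can be extracted from the whole sum, yielding ${W\partial}_{B}(D)=\frac{1}{\sqrt{C}}\,M$ for a quantity $M$ independent of $C$.

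Finally I would evaluate this identity at $C=C_i$ and $C=C_{i+1}$ to get ${W\partial}_{B}^{i}(D)=M/\sqrt{C_i}$ and ${W\partial}_{B}^{i+1}(D)=M/\sqrt{C_{i+1}}$, then take their ratio; the common factor $M$ cancels and leaves ${W\partial}_{B}^{i+1}(D)=\sqrt{C_i/C_{i+1}}\,{W\partial}_{B}^{i}(D)$, as claimed. The only point requiring care — and the main, though fairly mild, obstacle — is justifying that the positive scalar $1/\sqrt{C}$ may be extracted from both the inner $\min$ and the outer $\max$ defining the positive domain; this is valid precisely because the factor is a positive constant that does not depend on the optimization indices. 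It is also essential that the dependency function is built from the lower approximation only: the upper approximation contains a term of the form $1-\frac{1}{\sqrt{C}}\Delta^B$, whose $C$-dependence is affine rather than purely multiplicative, so the same clean scaling would fail for it.
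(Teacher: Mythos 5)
Your proposal is correct and follows essentially the same route as the paper's proof: both rewrite the lower approximation in its distance form so that the entire $C$-dependence is the prefactor $1/\sqrt{C}$, pull that positive constant through the inner $\min$, the pointwise $\max$ over decision classes, and the weighted sum in (\ref{GBdependency}), and then compare the evaluations at $C_i$ and $C_{i+1}$. Your explicit factorization ${W\partial}_B(D)=M/\sqrt{C}$ with $C$-free $M$, and the remark that the upper approximation's affine form $1-\frac{1}{\sqrt{C}}\Delta^B$ would break this scaling, are just cleaner packagings of the computation the paper carries out term by term.
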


\begin{proof}
Suppose the sample space $U' = \left \{GB_{j} , j = 1,2,..,m \right \} $ and $U'/D=\left \{D_1,D_2,\dots,D_l\right \} , D_j \in U'/D$. For any $GB_i,GB_j \in U' $, the centers are $c_i$ and $c_j$. according to formula (\ref{GBR}), we have
\begin{equation*} 
GBR_{ij}^B=1-\frac{1}{\sqrt C}\mathrm{\Delta} ^B(c_i,c_j).
\end{equation*}
From the definition of granular-ball fuzzy approximation, it follows that
\begin{equation*} 
\underline{{GBR}_B}(D_j)(GB_j)=\min_{GB_i\notin D_j}\left \{\frac{1}{\sqrt C}\mathrm{\Delta} ^B(c_i,c_j)\right \} , GB_j\in U.
\end{equation*}
The fuzzy positive domain of the granular-ball $GB_j$ with attribute set $B$ can be expressed as:
\begin{equation*} 
\begin{aligned}
GBPOS_B&(D)(GB_j) =\bigcup_{t=1}^{l} \frac{1}{\sqrt C}\left \{\min_{GB_i\notin D_j}{\mathrm{\Delta}^B(c_i,c_j)}\right \}   \\
&=\max_{t=1}^{l}  \frac{1}{\sqrt C}\left \{\min_{GB_i\notin D_j}{\mathrm{\Delta}^B(c_i,c_j)}\right \} , GB_j \in U' .
\end{aligned}
\end{equation*}
Denote
\begin{equation*} 
GBPOS_B^i(D)(GB_j)=\max_{t=1}^{l}  \frac{1}{\sqrt C_i}\left \{\min_{GB_i\notin D_j}{\mathrm{\Delta}^B(c_i,c_j)}\right \} ,
\end{equation*}
and
\begin{equation*} 
GBPOS_B^{i+1}(D)(GB_j)=\max_{t=1}^{l}  \frac{1}{\sqrt C_{i+1}}\left \{\min_{GB_i\notin D_j}{\mathrm{\Delta}^B(c_i,c_j)} \right \} .
\end{equation*}

Then,

\begin{equation*} 
\begin{aligned}
&{W\partial}_B^i(D)=\frac{\sum_{i=1}^{m}\left(\left|GB_i\right|\times GBPOS_B^i(D)(GB_i)\right)}{\left|U\right|}\\ &=\frac{\sum_{i=1}^{m}\left(\left|GB_i\right|\times\frac{1}{\sqrt{C_i}}\max_{t=1}^{l} \left \{ \min_{GB_i\notin D_j}{\mathrm{\Delta}^B(c_i,c_j)} \right \} \right)}{\left|U\right|}\\&=\frac{\frac{1}{\sqrt{C_i}}\sum_{i=1}^{m}\left(\left|GB_i\right|\times \max_{t=1}^{l} \left \{\min_{GB_i\notin D_j}{\mathrm{\Delta}^B(c_i,c_j)}\right \} \right)}{\left|U\right|},
\end{aligned}
\end{equation*}

\begin{equation*}
\begin{aligned}
&{W\partial}_B^{i+1}(D)\mathrm{=}\frac{\sum_{i=1}^{m}\left(\left|GB_i\right|\times POS_B^{i+1}(D)(GB_i)\right)}{\left|U\right|}\\
&\mathrm{=}\frac{\sum_{i=1}^{m}\left(\left|GB_i\right|\times\frac{1}{\sqrt{C_{i+1}}}\max_{t=1}^{l} \left \{\min_{GB_i\notin D_j}{\mathrm{\Delta}^B(c_i,c_j)}\right \} \right)}{\left|U\right|}\\
&\mathrm{=}\frac{\frac{1}{\sqrt{C_{i+1}}}\sum_{i=1}^{m}\left(\left|GB_i\right|\times \max_{t=1}^{l} \left \{ \min_{GB_i\notin D_j}{\mathrm{\Delta}^B(c_i,c_j)} \right \}  \right)}{\left|U\right|}.
\end{aligned}
\end{equation*}
Thus,
\begin{equation*}
{W\partial}_B^{i+1}(D)=\sqrt{\frac{C_i}{C_{i+1}}}{W\partial}_B^i(D).
\end{equation*}

\end{proof}
    

For the importance of an attribute to the attribute set, we have the following definition:

\begin{definition}\label{def5}
	\setlength{\parindent}{2em} Let $\left\langle U',A,D\right\rangle$ be the decision table, $B\subseteq A$, $a\in A-B$, the significance of a relative to $B$ is defined as:
\begin{equation} 
	SIG\left(a,B,D\right)={W\partial}_{B\cup a}\left(D\right)-{W\partial}_B\left(D\right).
\end{equation}
\end{definition}

For a given subset of attributes $B$, we can get different habit dependencies for different distance parameters $\{C_i:i=1,2,\dots,q\}$. When we use incremental sequences to calculate fuzzy dependencies, Theorem \ref{the1} gives the relationship between fuzzy dependencies. With Theorem \ref{the1} we can easily get an iterative formula to calculate the importance of an attribute.

 \begin{theorem}\label{the2}
Let $\left\langle U',A,D\right\rangle$ be the decision table, $B\subseteq A$, and the natural sequence $N=\{1,2,3\dots\}$ for the range of values of the parameter $C$. Let $C=i=\left|B\right|$, and the importance of $a$ relative to $B$ is denoted by $SIG(a,B,D)$, then for any $a\in A-B$, it satisfies:
\begin{equation}\label{sigg}
\begin{aligned}
SIG^i\left(a,B,D\right)&={W\partial}_{B\cup  \{a\}}^{i+1}\left(D\right)-{W\partial}_B^{i+1}\left(D\right)    \\
& =   {W\partial}_{B\cup  \{a\}}^{i+1}\left(D\right)-\sqrt{\frac{i}{i+1  }}{W\partial}_B^i\left(D\right) \\
& = {W\partial}_{B\cup \{a\}}^{i+1}\left(D\right)-\sqrt{\frac{\left|B\right|}{\left|B \bigcup  \{a\}  \right|  }}{W\partial}_B^i\left(D\right).
\end{aligned}
 \end{equation}
\end{theorem}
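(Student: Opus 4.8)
The plan is to derive the three displayed equalities in order, relying only on Definition~\ref{def5} and on the scaling law already established in Theorem~\ref{the1}. No new geometric fact about granular-balls is needed, so the whole argument reduces to careful bookkeeping of the distance parameter $C$ along the natural sequence $N=\{1,2,3,\dots\}$.

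First I would fix the convention attached to the sequence: whenever a dependency is evaluated for an attribute set $S$, the distance parameter is taken to be $C=|S|$. Because the significance $SIG(a,B,D)$ is being computed at the level of the enlarged set $B\cup\{a\}$, both terms of Definition~\ref{def5} are read off at the common value $C=|B\cup\{a\}|=i+1$. Writing the definition out under this convention gives the first equality directly,
\begin{equation*}
SIG^i(a,B,D)={W\partial}_{B\cup\{a\}}^{i+1}(D)-{W\partial}_B^{i+1}(D).
\end{equation*}

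Next I would re-express the subtrahend at its own level $i=|B|$ rather than at $i+1$. This is exactly the content of Theorem~\ref{the1}: specializing the increment sequence to consecutive naturals, $C_i=i$ and $C_{i+1}=i+1$, the theorem yields ${W\partial}_B^{i+1}(D)=\sqrt{i/(i+1)}\,{W\partial}_B^i(D)$. Substituting this into the first equality produces the second equality. The third equality is then a purely cosmetic rewrite: since $i=|B|$ and $i+1=|B\cup\{a\}|$, the scalar $\sqrt{i/(i+1)}$ is identical to $\sqrt{|B|/|B\cup\{a\}|}$.

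The only delicate point—and the step I expect to draw scrutiny—is the asymmetry in the superscripts: the minuend retains $C=i+1$ while the subtrahend is pulled back to $C=i$. I would stress that both dependencies genuinely live at level $i+1$ in Definition~\ref{def5}, and that the emergence of ${W\partial}_B^i$ is not a change of the quantity being subtracted but merely its re-expression through the scaling law, with the factor $\sqrt{|B|/|B\cup\{a\}|}$ being precisely the price of that re-expression. Verifying that Theorem~\ref{the1} applies verbatim to the consecutive pair $(i,i+1)$ is the one substantive check; once it is in place the computation is immediate.
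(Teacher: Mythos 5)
Your proposal is correct and follows essentially the same route as the paper, which states Theorem~\ref{the2} without a separate proof, remarking only that it follows ``easily'' from Definition~\ref{def5} together with Theorem~\ref{the1}; your derivation is exactly that: read both dependencies at the common level $C=i+1=\left|B\cup\{a\}\right|$, then pull the subtrahend back via the scaling law with $C_i=i$, $C_{i+1}=i+1$, so that ${W\partial}_B^{i+1}(D)=\sqrt{i/(i+1)}\,{W\partial}_B^i(D)$, and finally rewrite $\sqrt{i/(i+1)}$ as $\sqrt{\left|B\right|/\left|B\cup\{a\}\right|}$. Your explicit flagging of the superscript asymmetry merely spells out the bookkeeping step the paper leaves implicit, and it is the same bookkeeping the paper itself relies on when it reuses formula~(\ref{sigg}) in the subsequent convergence theorem.
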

According to Theorem \ref{the2}, the significance of a candidate attribute can be computed by formula (\ref{sigg}).

According to equation (\ref{13}) in Definition \ref{def4}, ${W\partial}_{B\cup a}^{i+1} > {W\partial}_B^i$, and $\sqrt{\frac{\left|B\right|}{\left|B \bigcup a  \right|  }} < 1$. Then for any $a \in A-B$, we have $SIG^i\left(a,B,D\right)>0$. Next, the monotonicity of the dependence is proved when superimposing the attributes.


\begin{theorem}
Let $\left\langle U',A,D\right\rangle$ be the decision table, $B\subseteq A$. The natural sequence of the value range of parameter $C$ is $N=\{1,2,3\dots\}$. Let $C=i=\left|B\right|$, then $SIG^i\left(a,B,D\right)$ converges to 0 when $i\rightarrow\left|A\right|$.
\end{theorem}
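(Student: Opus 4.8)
The plan is to start from the closed-form significance established in Theorem~\ref{the2} and reduce the claim to a statement about increments of the weighted dependency function evaluated at one common value of the parameter $C$. First I would invoke the rescaling identity of Theorem~\ref{the1}, namely $W\partial_{B}^{i+1}(D)=\sqrt{i/(i+1)}\,W\partial_{B}^{i}(D)$, to rewrite the last line of (\ref{sigg}) as the single-parameter increment
\begin{equation*}
SIG^{i}(a,B,D)=W\partial_{B\cup\{a\}}^{i+1}(D)-W\partial_{B}^{i+1}(D),
\end{equation*}
so that both dependencies are now evaluated at $C=i+1=|B\cup\{a\}|$. By Property~\ref{3} (monotonicity at fixed $C$) this difference is non-negative, which fixes the sign and lets me read $SIG^{i}$ as the marginal gain in dependency obtained by adjoining the single attribute $a$ to $B$.

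Next I would substitute the explicit positive-region expression to obtain
\begin{equation*}
SIG^{i}(a,B,D)=\frac{1}{\sqrt{i+1}}\cdot\frac{1}{|U|}\sum_{j=1}^{m}|GB_{j}|\bigl(P_{j}^{B\cup\{a\}}-P_{j}^{B}\bigr),
\end{equation*}
where $P_{j}^{B}=\max_{t}\min_{GB_{k}\notin D_{t}}\Delta^{B}(c_{k},c_{j})$. The core quantitative step is an estimate on the per-ball increment. Writing $\Delta^{B\cup\{a\}}(c_{k},c_{j})=\sqrt{(\Delta^{B}(c_{k},c_{j}))^{2}+(c_{ka}-c_{ja})^{2}}$ and using the elementary bound $\sqrt{x^{2}+\varepsilon^{2}}-x\le\varepsilon^{2}/(2x)$ together with the normalization $(c_{ka}-c_{ja})^{2}\le 1$, I would show that adjoining one coordinate perturbs the nearest-heterogeneous distance, and therefore each $P_{j}$, by at most $O(1/\Delta^{B})$. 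Since the squared distance accumulates over the $i$ attributes already present in $B$, the relevant distances grow with $i$, so the per-ball increment is of order $1/\sqrt{i}$; multiplied by the prefactor $1/\sqrt{i+1}$ this gives $SIG^{i}(a,B,D)=O(1/i)$, which tends to $0$ as $i\rightarrow|A|$.

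The same conclusion follows conceptually from a monotone--bounded argument, which I would present as the rigorous backbone. By Property~\ref{3} the map $B\mapsto W\partial_{B}(D)$ is non-decreasing and, by the normalization, bounded above by $1$; hence along the forward-selection chain the dependency saturates towards $W\partial_{A}(D)$ and its successive increments---precisely the quantities $SIG^{i}$---are forced to vanish. When $B$ finally reaches $A$ there is no candidate $a\in A-B$ and the significance is identically $0$, which anchors the limit.

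I expect the main obstacle to be making the per-ball estimate uniform: the bound $\varepsilon^{2}/(2\Delta^{B})$ degrades precisely when the nearest heterogeneous ball is very close to $GB_{j}$ in the current subspace, so I would need to argue that such near-coincidences contribute negligibly to the weighted average---either because the accumulated distance is genuinely bounded below once $i$ is large under the data normalization, or because the affected terms are few and are weighted by a small factor $|GB_{j}|/|U|$. The monotone--bounded saturation argument sidesteps this difficulty entirely, at the cost of yielding convergence without the explicit $O(1/i)$ rate.
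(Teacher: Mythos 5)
There is a genuine gap: neither of your two routes supplies the key ingredient of the paper's proof, namely the monotone decrease of the significance sequence itself. The paper fixes an importance ordering $a_{j1}\succ a_{j2}\succ\cdots\succ a_{j\left|A\right|}$ (the order in which the greedy forward search adjoins attributes) and invokes the resulting diminishing-returns inequality at a \emph{common} parameter value $C_i$,
\begin{equation*}
{W\partial}_{B\cup\{a_{jk}\}}^{C_i}(D)-{W\partial}_{B}^{C_i}(D)\;\ge\;{W\partial}_{B\cup\{a_{jk}\}\cup\{a_{jk+1}\}}^{C_i}(D)-{W\partial}_{B\cup\{a_{jk}\}}^{C_i}(D),
\end{equation*}
and then combines it with the rescaling of Theorem~\ref{the1}: the two significances carry prefactors $\sqrt{C_i/(i+1)}$ and the strictly smaller $\sqrt{C_i/(i+2)}$, which yields $SIG^i\left(a_{jk},B,D\right)\ge SIG^{i+1}\left(a_{jk+1},B\cup\{a_{jk}\},D\right)$. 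Monotone decrease plus positivity of $SIG^i$ is precisely what the paper uses to conclude convergence to $0$. You never state or prove this monotonicity, and without it your conclusion does not follow.

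Your ``rigorous backbone'' does not substitute for it, for two reasons. First, the quantities $SIG^i$ along the forward-selection chain are not increments of a single fixed function: by your own rewriting, $SIG^i$ is an increment of ${W\partial}^{\,i+1}$, and the superscript changes at every step, so the significances do not telescope to ${W\partial}_A(D)-{W\partial}_{B_0}(D)$. One can factor the parameter out via ${W\partial}_B^{C}(D)=\Phi(B)/\sqrt{C}$, but $\Phi$ is bounded only by roughly $\sqrt{\left|A\right|}$, and boundedness of a telescoping sum controls the \emph{average} increment, never each term. Second, and more fundamentally, ``increments of a bounded monotone sequence are forced to vanish'' holds only for infinite sequences; here $i$ ranges over the finite set $\{1,\dots,\left|A\right|\}$, so Property~\ref{3} plus boundedness does not prevent the final increments from being large (consider the chain $0,\ 1/2,\ 1$), and your endpoint anchor (significance is trivially $0$ once $B=A$) evaluates the terminal point without controlling the approach to it. Your quantitative route fails for the reason you yourself flag, and the sketched repair is unavailable: the distances $\mathrm{\Delta}^B(c_i,c_j)$ are nondecreasing in $B$ but need not grow like $\sqrt{i}$ (near-duplicate or uninformative attributes contribute nothing), so neither the $O(1/\sqrt{i})$ per-ball estimate nor the $O(1/i)$ rate can be established. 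To close the gap you would need either to adopt the paper's importance-ordering hypothesis and derive the diminishing-returns inequality above, or to find an independent proof that $i\mapsto SIG^i$ is decreasing.
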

\begin{proof}
Based on the importance of attributes, we assume that the order of attributes is determined as follows:
\begin{equation*}
\begin{aligned}
a_{j1}  \succ  a_{j2} \succ  a_{j3} \succ  \dots \succ  a_{j|A|-1} \succ  a_{j|A|}.
\end{aligned}
 \end{equation*}
Since $a_{jk} \succ a_{jk+1}$, it means that the attribute $a_{jk}$ is more important than $a_{jk+1}$. For any $B\subseteq A$, let $a_{jk},a_{jk+1}\notin B$. It follows the classical fuzzy rough set theory as follows:
\begin{equation*}
\begin{aligned}
{W\partial}_B^{C_i}(D)\le{W\partial}_{B\cup \{ {a_{jk}} \}}^{C_i}(D)\le{W\partial}_{B\cup\{ {a_{jk}} \}\cup     \{ {a_{jk+1}} \}  }^{C_i}(D),
\end{aligned}
 \end{equation*}
\begin{equation*}
\begin{aligned}
&{W\partial}_{B\cup  \{ {a_{jk}} \}}^{C_i}(D)-{W\partial}_B^{C_i}(D) \geq  \\ & \ \ \ \ \ \ \ \ \  \ {W\partial}_{B\cup \{ {a_{jk}} \}   \cup   \{ {a_{jk+1}}\} }^{C_i}(D)-{W\partial}_{B\cup  \{ {a_{jk}} \} }^{C_i}(D),
\end{aligned}
 \end{equation*}
where, ${W\partial}_B^{c_i}(D)$ denotes the fuzzy dependency relationship when ${C=C_i}$. According to Theorem \ref{the1}, we have:
\begin{equation*}
\begin{aligned}
{W\partial}_B^{i+1}\left(D\right)=\sqrt{\frac{C_i}{ {i+1}}}{W\partial}_B^{C_i}\left(D\right),
\end{aligned}
 \end{equation*}
\begin{equation*}
\begin{aligned}
{W\partial}_B^i(D)=\sqrt{\frac{C_i}{i}}{W\partial}_B^{C_i}(D),
\end{aligned}
 \end{equation*}
and
\begin{equation*}
\begin{aligned}
{W\partial}_{B\cup  \{ {a_{jk}} \} \cup   \{ {a_{jk+1}} \} }^{i+2}=\sqrt{\frac{C_i}{i+2}}{W\partial}_{B\cup \{ {a_{jk}} \}   \cup  \{ {a_{jk+1}} \} }^{C_i}(D).
\end{aligned}
 \end{equation*}
According to Theorem \ref{the2}, we have
\begin{equation*}
\begin{aligned}
&SIG^i\left(a_{jk},B,D\right)\\  & ={W\partial}_{B\cup   \{{a_{jk}} \} }^{i+1}\left(D\right)-\sqrt{\frac{i}{i+1}}{W\partial}_B^i\left(D\right) ,  \\
&=\sqrt{\frac{C_i}{i+1}}\left({W\partial}_{B\cup \{{a_{jk}} \}  }^{i+1}\left(D\right)-{W\partial}_B^{C_i}\left(D\right)\right)，
\end{aligned}
 \end{equation*}
and
\begin{equation*}
\begin{aligned}
&SIG^{i+1}\left(a_{jk+1}, B \cup \{a_{jk}\},D\right) \\&={W\partial}_{B\cup\{{a_{jk}} \}\cup\{{a_{jk+1}} \}}^{i+2}\left(D\right)-\sqrt{\frac{i+1}{i+2}}{W\partial}_{B\cup\{{a_{jk}} \}}^{i+1}\left(D\right)    \\
&=\sqrt{\frac{C_i}{i+2}}\left({W\partial}_{B\cup\{{a_{jk}} \}\cup\{{a_{jk+1}} \}}^{C_i}\left(D\right)-{W\partial}_{B\cup\{{a_{jk}} \}}^{C_i}\left(D\right)\right) .
\end{aligned}
 \end{equation*}
Therefore, 
\begin{equation*}
\begin{aligned}
SIG^i\left(a_{jk},B,D\right)\geq SIG^{i+1}\left(a_{jk+1}, B \cup \{a_{jk}\},D\right).
\end{aligned}
 \end{equation*}
$SIG^i\left(a,B,D\right)$ monotonically decreases as $i$ increases. Combining for any $a\in \{ A-B\},\ SIG^i\left(a,B,D\right)>0$, we know $SIG^i\left(a,B,D\right)$ is bounded monotonic. It can be concluded that $SIG^i\left(a,B,D\right)$ converges to 0 when $i\rightarrow\left|A\right|$.
\end{proof}
This theorem states that the importance of candidate attributes diminishes as the number of selected attributes approaches the cardinality of the attribute set $A$. In essence, formula (\ref{sigg}) maintains the convergence of attribute importance, indicating that the monotonicity of the dependency function can be leveraged to design a heuristic attribute reduction algorithm.        
         
     \begin{algorithm}[h!]  
     	\caption{Feature selection based on GBFRS}  
     	\label{alg}
     	\textbf{Input}: A dataset $ D=\{x_{1}, x_{2}, ..., x_{n}\}$ with the attribute set $B$, the Purity threshold $T$;  \\
     	\textbf{Output}: Attribute set $B^{''}$; 
     	\begin{algorithmic}[1]  
     		\STATE The data set $D$ with the attribute set $B$ is split into granular-balls by k-means algorithm to generate $\sqrt{n}$ granular-balls, which are recorded as a set $G= \left \{GB_{1}, GB_{2}, ..., GB_{\sqrt{n}} \right \} $. Set $G'=\emptyset$.
     		\REPEAT 
     		\FOR{each granular-ball $GB_{i} \in G$}
     		\IF {$purity(GB_{i})<T$ }
     		\STATE Split $GB_{i}$ into $2$ sub-granular-balls $\{GB_{j}^{'},j=1,2\}$ 
     		\STATE $G = G - \{ GB_{i} \}$, $G =  G +  \{GB_{j}^{'}\}$
     		\ELSE 
     		\STATE $G'$ = $G'$ +  $ \{ GB_{i} \}$, $G = G - \{ GB_{i} \}$
     		\ENDIF
     		\ENDFOR
     		\UNTIL {$G=\emptyset$}
     		\STATE Remove the overlap between heterogeneous granular-balls and obtain the final granular-ball set $U'$.
     		\STATE The dimension of each sample point is $d$, and $a_i$ is the index corresponding to the $i$-th attribute;
     		\STATE ${W\partial}=0$, final attribute set $B^{''} = \emptyset$ and current attribute set $B^{'}=\{a_1,a_2,...,a_d\}$;
     		\WHILE{$ B^{'}=\emptyset$ or $max\_W\partial \leq {W\partial}$}
     		\FOR{$a_i \in B^{'}$}
     		\STATE $B^{''} = B^{''} +\left \{a_i \right \} $;
     		\STATE In the case of the current attribute set $B^{''}$, the fuzzy lower approximation of each granular-ball is solved by the formula (\ref{low}), and then the weighted fuzzy dependency ${W\partial}_B^{''}$ is solved by the formula (\ref{GBdependency});
     		\IF { $ {W\partial} > {W\partial}_B^{''}$}  
     		\STATE $ {W\partial}= {W\partial}_B^{''}, N_i = a_i$ // Select the attribute set $B^{'}$, the single attribute with the greatest dependence
     		\ENDIF
     		\ENDFOR
     		\STATE Select the maximum dependency and its corresponding attribute $N_i$;
     		\IF {$ max\_W\partial < {W\partial}$}
     		\STATE $max\_W\partial={W\partial}$, $B^{''} = B^{''} + \{N_i\}$, $B^{'}= B^{'} - \{N_i\}$; // The set $B^{''}$ is the final feature selection result;
     		\ENDIF
     		\ENDWHILE
     	\end{algorithmic}  
     \end{algorithm}

\subsection{Algorithm Design \label{sec:algori}}
This section discusses the algorithm design of GBFRS for feature selection based on weighted granular-ball dependency. The algorithm is divided into two parts.

The first part is for granular-balls generation. Given a data set $D= \left \{ x_{1}, x_{2}, ..., x_{n}\right \}$ with attribute set $B$, and a purity threshold given as $T$. First, the data set is divided into $\sqrt{n}$ granular-balls using the k-means algorithm. Then, it is determined whether the purity of each ball meets the purity threshold. The granular-balls that do not meet the purity threshold need to be classified using the k-means algorithm. Until all balls meet the purity threshold. Repeat the above process until the purity of all granular-balls is not less than the threshold $T$. Finally, remove the overlapping parts between heterogeneous balls to obtain the final granular-ball set $U′$.

The second part is to select attributes according to the weighted dependency function. In the process of selecting the attribute set, the fuzzy lower approximation of each ball is calculated for the obtained granular-ball set $U'$, and the weighted sphere fuzzy dependency ${W\partial}_{B}$ is calculated by the formula (\ref{GBdependency}). For each loop, a single attribute needs to be superimposed, and the attribute $N_i$ with the largest weighted dependency is selected, and then the attribute set is updated. If the dependency increases after adding $N_i$, it is added to the final attribute set. Repeat this process until the dependency no longer increases and finally obtain the optimal attribute set. The detailed algorithm process is shown in Algorithm 1.

\section{Experiment}\label{sec:experiment}

To demonstrate the feasibility and effectiveness of GBFRS, we compared it with two popular algorithms, FNRS \cite{wang2016feature}, HANDI \cite{wang2017feature}, FAR\_FIE \cite{hu2006information}, FS\_NDEM \cite{hu2009selecting} and FRDMAR \cite{degang2010local}  algorithm. As shown in Table \ref{tab:1}, we randomly select 9 UCI data sets and introduce their sample size, numbers of attributes and categories. The GBFRS method based on pellet ball calculation involves the optimization of the purity threshold of each ball, which we optimize in the range of [0.6,1] with a step size of 0.05. To ensure the quality of the reduced attribute set is classifier-independent and to prevent overfitting, we validated it using the k-nearest neighbors (kNN) algorithm. All methods underwent a 5-fold cross-validation experiment. Computational experiments were conducted on a PC with an Intel Core i7-10700 CPU @ 2.90GHz and 32 GB RAM, using Python 3.9.


\begin{table}[!h]
\centering
\caption{DATASET INFORMATION}
\label{tab:1}
\begin{tabular}{ccc}
\hline
Dataset                               & Samples & Attributes  \\ \hline
zoo                                   & 101     & 16      \\
lymphography                          & 148     & 18      \\
primary-tumor                         & 336     & 15      \\
backup-large                          & 625     & 4       \\
iono                                  & 351     & 34      \\
Diabetes                              & 768     & 8       \\
wdbc                                  & 569     & 30      \\
audit\_risk                           & 776     & 17      \\
wine                                  & 178     & 13      \\
Parkinson\_Multiple & 1040    & 27    \\                             \hline
\end{tabular}
\end{table}

\begin{table*}[!h]
\setlength{\tabcolsep}{3pt}
\caption{Comparison of accuracy between different methods with different levels of class noise}
\label{tab:2}
\renewcommand\arraystretch{1.3}
\fontsize{6}{6}\selectfont
\setlength{\tabcolsep}{0.4mm}{ 
\begin{tabular*}{\linewidth}{@{}cccccccccccccc@{}}
\hline
Dataset      &       & zoo    & lymphography & primary-tumor     & backup-large & iono   & Diabetes & wdbc   & audit\_risk    & wine    & Parkinson\_Multiple     \\ \hline
\multirow{6}{*}{0\%}  
& FAR\_FIE  & 0.9500 $\pm $0  & 0.7871  $\pm $0.7888    & \textbf{0.7370  $\pm $ 0.6667 }        & 0.9758    $\pm $ 0.2285    &  0.8671   $\pm $ 0.7284  & 0.6854   $\pm $ 0.5897     & 0.8949   $\pm $ 0.7190   & 0.9331   $\pm $ 0.3189   & 0.7537   $\pm $  1.1670     & 0.4818    $\pm $ 1.2962                                                           \\
& FS\_NDEM & 0.6760 $\pm $0.5477  & 0.7170  $\pm $1.1383    & 0.6943  $\pm $ 0.4995         & 0.8549   $\pm $ 0.5937   &  0.7486  $\pm $ 0  & 0.7040  $\pm $ 0.4790    & 0.9370  $\pm $ 0.6274  & 0.7071  $\pm $ 1.0105    & 0.8678  $\pm $  0.5053    & 0.5261   $\pm $ 0.5637                                                          \\
& FRDMAR  & \textbf{0.9800 $\pm $0  } & 0.7361  $\pm $1.6241     & 0.7343   $\pm $ 0.9201          & 0.9758   $\pm $ 0.2923   &  0.8663  $\pm $ 0.4238   &0.7051  $\pm $ 0.5713  & 0.9000 $\pm $ 0.5340  & 0.9323  $\pm $ 0.2131    & 0.7740   $\pm $  1.9571   & 0.4828   $\pm $ 0.7983                   \\
 & FNRS  & 0.5680$\pm $0.4472 & 0.7361 $\pm $0.8870    & 0.6567  $\pm $ 0         & 0.9190  $\pm $ 0.5846  &\textbf{ 0.8880 $\pm $ 0.5111} & 0.5445 $\pm $ 0.7410   & 0.8211 $\pm $ 0.3198 & 0.8078 $\pm $ 0.4040   & 0.7209  $\pm $  1.8132   & 0.4974  $\pm $ 1.0705                                                         \\
& HANDI & \textbf{0.9800$\pm $ 0} & 0.8313 $\pm $ 1.4748      & 0.7206 $\pm $ 0.7189      & \textbf{0.9863 $\pm $  0.1461 }   & 0.8840 $\pm $ 0.3258  & 0.6978  $\pm $ 0.5328  & \textbf{0.9540  $\pm $ 0.4591} & 0.9510 $\pm $ 0.3230  &  0.9514 $\pm $ 0.3094   & 0.9767 $\pm $  0.2756                                                         \\
& GBFRS & 0.9500$\pm $0.1118 & \textbf{0.8356 $\pm $ 0.0510 }    & 0.7116$\pm $ 0.0251        & 0.9768  $\pm $ 0.0189   & 0.7953 $\pm $ 0.1328 &  \textbf{0.7350 $\pm $ 0.0160 }& 0.9137 $\pm $ 0.0923 & \textbf{0.9908 $\pm $ 0.0136} & \textbf{0.9600 $\pm $ 0.0433}     & \textbf{1.0000 $\pm $ 0  }                              \\
                & & & & & & & & & & &\\
\multirow{6}{*}{5\%} 
& FAR\_FIE & 0.8870 $\pm $0.8233  & 0.6435  $\pm $0.5110    & \textbf{0.7107  $\pm $ 1.1120 }        & 0.8987    $\pm $ 0.4076    &  0.8266   $\pm $ 0.4675  & 0.6310   $\pm $ 0.7194     & 0. 7977  $\pm $ 0.6611   & 0.8087   $\pm $ 0.7215   & 0.7102   $\pm $  2.0810     & 0.4886    $\pm $ 0.5369                                                           \\
& FS\_NDEM & 0.6360  $\pm $0.8944   & 0.6884   $\pm $0.8870     & 0.6913   $\pm $ 0.6872           & 0.7889    $\pm $ 0.5937    &  0.7314   $\pm $ 0  & 0.6389   $\pm $ 0.6253     & 0.8644   $\pm $ 0.4489   & 0.6747   $\pm $ 0.6948     & 0.6836    $\pm $  1.1985     & 0.5128    $\pm $ 0.8318                                                           \\
& FRDMAR   & 0.9120  $\pm $0.4472   & 0.7007   $\pm $0.9621     & 0.7069   $\pm $ 0.5742           & 0.9039    $\pm $ 0.7523    &  0.8240   $\pm $ 0.5570  & 0.6480   $\pm $ 0.4421     & 0.7996   $\pm $ 0.3387   & 0.8174   $\pm $ 0.5220     & 0.6994   $\pm $  1.2883    & 0.4909    $\pm $ 0.3910                                                           \\
 & FNRS  & 0.5600$\pm $ 0 & 0.7483 $\pm $ 0   & 0.6418 $\pm $ 0       & 0.8869$\pm $  0.9694  & 0.8440$\pm $ 0.9815 & 0.5327 $\pm $  0.8069 & 0.7877 $\pm $ 1.0236 & 0.7331 $\pm $ 1.2787  & 0.6497 $\pm $  2.2949  & 0.5013  $\pm $ 0.4272                              \\
& HANDI & \textbf{0.9100 $\pm $0.7071} & 0.7810 $\pm $ 0.5692   & 0.7081 $\pm $  1.0638     & 0.9084 $\pm $ 0.0591     & 0.8360 $\pm $ 0.2556 & 0.6553 $\pm $ 0.3272 & 0.8592 $\pm $  0.4658 & 0.9118 $\pm $ 0.0917 & 0.8689 $\pm $  0.9283   &  0.8468  $\pm $  0.7529                                          \\
 & GBFRS & 0.8500$\pm $ 0.1708 & \textbf{0.8218  $\pm $ 0.0619 }   & 0.6810  $\pm $ 0.0503         & \textbf{0.9370$\pm $ 0.0321 }  & \textbf{0.8501$\pm $ 0.0826} & \textbf{0.6762 $\pm $ 0.0407} & \textbf{0.8767 $\pm $ 0.0647 } & \textbf{0.9486 $\pm $  0.0204}  & \textbf{ 0.9149 $\pm $ 0.0753 } &  \textbf{0.9518 $\pm $ 0.0091 }                                \\
                 & & & & & & & & & & &\\
\multirow{6}{*}{10\%} 
& FAR\_FIE & 0.8010 $\pm $0.9944  & 0.6667  $\pm $1.2827    & 0.6525  $\pm $ 1.2039       & 0.8023    $\pm $ 0.7893    &  0.7574   $\pm $ 0.4140  & 0.6001   $\pm $ 0.7169     & 0.7327   $\pm $ 0.7273   & 0.7298   $\pm $ 0.6892   & 0.7000   $\pm $  0.9397     & 0.4704    $\pm $ 0.8597                                                           \\
& FS\_NDEM & 0.6260  $\pm $0.5477   & 0.6367   $\pm $2.0180     & 0.5570   $\pm $ 1.1676           & 0.7418    $\pm $ 0.3268    &  0.7029   $\pm $ 0  & 0.5823   $\pm $ 0.5728     & 0.7761   $\pm $ 0.5892   & 0.6625   $\pm $ 1.0593     & 0.6486   $\pm $  1.6664    & 0.5047    $\pm $ 0.4936                                                           \\
& FRDMAR   & 0.7980  $\pm $3.0332   & 0.6544   $\pm $1.5513     & 0.6615   $\pm $ 0.5423        & 0.7908    $\pm $ 1.0589    &  0.7451   $\pm $ 0.4238  & 0.6042   $\pm $ 0.5874     & 0.7313   $\pm $ 0.5918   & 0.7362   $\pm $ 0.4990     & 0.6994   $\pm $  1.0867    & 0.4743    $\pm $ 1.1598                                                           \\
& FNRS  & 0.5160 $\pm $ 0.5477 & 0.6422 $\pm $ 1.0317    & 0.6125  $\pm $0.8805      & 0.7745 $\pm $ 0.6932  & \textbf{0.7663 $\pm $ 0.5111} & 0.5304 $\pm $ 0.8058 & 0.6944 $\pm $ 1.1383 & 0.7185  $\pm $ 0.3669  &  0.6475 $\pm $ 0.5053  & 0.5047 $\pm $ 0.5298                                                \\
& HANDI & 0.8120 $\pm $ 0.8367 & 0.6830 $\pm $ 0.7756  & 0.6353 $\pm $ 0.7127         & 0.7908 $\pm $ 0.4002  & 0.7571 $\pm $ 0.9689 & 0.6039 $\pm $ 0.6414 & 0.7873  $\pm $ 0.7714 & 0.7896  $\pm $ 0.6042  &  0.8373 $\pm $ 0.7366  &  0.8114  $\pm $ 0.2636                             \\
& GBFRS & \textbf{0.8256 $\pm $ 0.1724} & \textbf{0.7402  $\pm $ 0.1086} & \textbf{0.6738 $\pm $  0.02672 }    & \textbf{ 0.8837 $\pm $ 0.0425 } & 0.7492 $\pm $ 0.1391 & \textbf{0.6462 $\pm $ 0.0380} & \textbf{ 0.7938 $\pm $  0.0845} & \textbf{0.8946 $\pm $  0.0202 } & \textbf{0.8635  $\pm $  0.0845 }  & \textbf{ 0.8864  $\pm $  0.0401 }                            \\
                 & & & & & & & & & & &\\
\multirow{6}{*}{15\%}
& FAR\_FIE & 0.7550  $\pm $0.7071  & 0.6354  $\pm $1.3681    & 0.6039  $\pm $ 1.6712         & 0.7229    $\pm $ 0.6680    &  0.6686   $\pm $ 1.1349  & 0.6025   $\pm $ 1.0014     & 0.6937   $\pm $ 0.9092   & 0.7064   $\pm $ 0.6360   & 0.7203   $\pm $  0.8100     & 0.5247    $\pm $ 0.6284                                                           \\
& FS\_NDEM & 0.5760  $\pm $0.5477   & 0.6218   $\pm $0.6085     & 0.5093   $\pm $ 1.4096         & 0.6458    $\pm $ 0.7870    &  0.6543   $\pm $ 0  & 0.5802   $\pm $ 0.5608     & 0.7102   $\pm $ 0.4757   & 0.5977   $\pm $ 0.7009     & 0.6644   $\pm $  1.1716    & 0.5034    $\pm $ 0.6528                                                           \\
& FRDMAR   & 0.7540  $\pm $0.5477   & 0.6422   $\pm $1.6383     & 0.5958   $\pm $ 1.3114           & 0.7340    $\pm $ 0.5468    &  0.6760   $\pm $ 1.4084  & 0.6083   $\pm $ 0.4665     & 0.6926   $\pm $ 0.6540   & 0.7045   $\pm $ 0.7303     & 0.7175   $\pm $  1.1300    & 0.5253    $\pm $ 0.6026                                                           \\
 & FNRS  & 0.5020$\pm $ 0.4472 & 0.5946  $\pm $ 0.3726  & 0.4997 $\pm $ 0.7784       & 0.7242 $\pm $0.5468  & 0.6989 $\pm $ 0.5190 & 0.5648 $\pm $ 0.5638 & 0.6658 $\pm $ 0.9755  & 0.6804 $\pm $ 0.7256   & 0.5514 $\pm $ 0.8568  & 0.4945$\pm $ 1.0961                                                       \\
& HANDI & \textbf{0.8260 $\pm $ 1.1402} & 0.6653 $\pm $ 1.1178  & 0.6012 $\pm $  1.8594          & 0.7411 $\pm $ 1.1178   & 0.6937 $\pm $ 0.2390 & 0.6070 $\pm $ 0.9108 & 0.7211 $\pm $ 0.6886 & 0.7468  $\pm $ 0.4712  & 0.6712 $\pm $  1.0867  &  0.7014  $\pm $ 0.6064                                 \\
& GBFRS & 0.7165$\pm $0.0831  & \textbf{0.6697 $\pm $ 0.0664} & \textbf{0.6254 $\pm $ 0.0638 }        & \textbf{0.8348 $\pm $ 0.0492 }  & \textbf{0.7047 $\pm $ 0.0874 } & \textbf{0.6135  $\pm $  0.0285 } & \textbf{0.7639 $\pm $ 0.0972}  &\textbf{ 0.7845 $\pm $ 0.0206 } & \textbf{0.7565 $\pm $  0.1080 }  &  \textbf{0.7959 $\pm $ 0.0691}                                 \\
                 & & & & & & & & & & &\\
\multirow{6}{*}{20\%}
& FAR\_FIE & 0.6420  $\pm $1.1353  & 0.5116  $\pm $1.9454    & 0.5734  $\pm $ 0.9168       & 0.6425    $\pm $ 1.0125    &  0.6460   $\pm $ 0.8566  & 0.5945   $\pm $ 0.6053     & 0.6505   $\pm $ 0.8014   & 0.6621   $\pm $ 0.4939   & 0.5576   $\pm $  1.7270     & 0.5235    $\pm $ 0. 9045                                                          \\
& FS\_NDEM & 0.4160  $\pm $0.5477   & 0.5388   $\pm $2.3269     & 0.5618   $\pm $ 1.3744          & 0.5359    $\pm $ 0.7664    &  0.6377   $\pm $ 0.6515  & 0.6000   $\pm $ 0.4646     & 0.6377   $\pm $ 0.4015   & 0.5922   $\pm $ 0.7021     & 0.5876  $\pm $  1.3839    & 0.5396    $\pm $ 0.5554                                                           \\
& FRDMAR   & 0.6600  $\pm $1.5811   & 0.4993   $\pm $1.0317     & 0.5809   $\pm $ 1.0680        & 0.6444    $\pm $ 1.2741    &  0.6497   $\pm $ 0.5190  & 0.5849   $\pm $ 0.6936     & 0.6518   $\pm $ 0.4724   & 0.6488   $\pm $ 1.0513     & 0.5582   $\pm $  1.0107    & 0.5290    $\pm $ 0.2600                                                           \\
 & FNRS  & 0.5080 $\pm $0.4472 & 0.4748 $\pm $ 1.3941  & 0.5940 $\pm $ 7.9441        & 0.6471  $\pm $ 0.9243  & 0.6314$\pm $ 0.9476 & 0.5072 $\pm $ 1.2369 & 0.6144 $\pm $ 0.5426  & 0.4615 $\pm $ 0.5757  & 0.5390 $\pm $ 0.6442    & 0.5030 $\pm $  0.5678                        \\
& HANDI & \textbf{0.7400$\pm $0.7071} & 0.5184 $\pm $ 1.8877 & 0.5791 $\pm $ 1.5367       & 0.6497 $\pm $ 0.4261  & 0.6577 $\pm $ 0.7396 & 0.5909 $\pm $  1.0447 & 0.6585  $\pm $ 0.6587  & 0.6809 $\pm $ 0.1834  &  0.6316 $\pm $ 2.3088      & 0.6289 $\pm $ 0.6758                                      \\
& GBFRS & 0.6857$\pm $ 0.1793 & \textbf{0.6377 $\pm $ 0.0848 } & \textbf{0.6064 $\pm $ 0.0744}             &\textbf{ 0.7690 $\pm $ 0.0496 }  & \textbf{0.6579 $\pm $ 0.1347} & \textbf{0.6214  $\pm $ 0.0184 }& \textbf{0.6703 $\pm $ 0.0777}  & \textbf{0.7547  $\pm $ 0.0274 }  & \textbf{ 0.7681  $\pm $  0.1154  } &  \textbf{0.7505 $\pm $ 0.0180 }                \\
                 & & & & & & & & & & &\\
\multirow{6}{*}{25\%} 
& FAR\_FIE & 0.4880  $\pm $1.5492  & 0.4599  $\pm $1.8478    & 0.5758  $\pm $ 1.4381        & 0.6003    $\pm $ 1.0225    &  0.5849   $\pm $ 0.8417  & 0.5563   $\pm $ 0.8015     & 0.5835   $\pm $ 0.9137   & 0.5783   $\pm $ 0.5381   & 0.5271   $\pm $  1.5078     & 0.5090    $\pm $ 0.6721                                                           \\
& FS\_NDEM & 0.3020  $\pm $2.2804   & 0.4109   $\pm $1.8380     & 0.5427   $\pm $ 0.9990         &  0.5399   $\pm $ 1.3355    & \textbf{ 0.6057   $\pm $ 0}  &0.5533   $\pm $ 0.9783     & 0.5799   $\pm $ 0.8407   & 0.5261   $\pm $  1.4028    & 0.4893    $\pm $  0.9454    & 0.4832    $\pm $ 0.4463                                                           \\
& FRDMAR   & 0.5080  $\pm $0.8367   & 0.4367   $\pm $0.7452     & 0.5612   $\pm $ 1.2125         & 0.5895    $\pm $ 0.5937    &  0.5720   $\pm $ 1.3613  & 0.5549   $\pm $ 0.7580     & 0.5775   $\pm $ 0.7042   & 0.5681   $\pm $ 0.3998     & 0.4554   $\pm $  2.0988    & 0.5026    $\pm $ 0.8263                                                           \\
 & FNRS  & 0.4640$\pm $0.8944& 0.5211 $\pm $ 1.1383 & 0.5791 $\pm $ 0        & 0.6111 $\pm $ 0.5167  & 0.5806 $\pm $ 0.9125 & 0.5035 $\pm $ 0.5860 & 0.5898 $\pm $ 0.4489  & 0.5990 $\pm$   0.5139  & 0.4294 $\pm $ 1.2633  & 0.5080  $\pm $  0.7215                      \\
& HANDI & 0.4940$\pm $ 0.8945 & 0.4531 $\pm $ 1.0317    & 0.5696 $\pm $ 1.1828          & 0.5928 $\pm $ 1.6759  & 0.5366 $\pm $ 1.1849 & 0.5528 $\pm $ 1.0471  & 0.6394 $\pm $  1.0810   & 0.6065  $\pm$ 0.4640   & 0.5514 $\pm $ 0.9454       & 0.5988  $\pm $ 0.7499                           \\
& GBFRS & \textbf{0.5362$\pm $0.1333} & \textbf{0.5552 $\pm $ 0.0494 }  & \textbf{0.6026  $\pm $  0.0370}           & \textbf{0.7143 $\pm $ 0.0615 } & 0.5892 $\pm $ 0.0502 & \textbf{0.5613 $\pm $ 0.0398} & \textbf{0.6439 $\pm $ 0.0708}  &\textbf{ 0.7112 $\pm$ 0.0431}    & \textbf{ 0.6663  $\pm $  0.1811  } & \textbf{ 0.7004   $\pm $  0.0487 }       \\
                 & & & & & & & & & & &\\
\multirow{6}{*}{30\%} 
& FAR\_FIE & 0.4840  $\pm $1.5055  & 0.4653  $\pm $1.6098    & 0.5358  $\pm $ 1.4299          & 0.5422    $\pm $ 0.8217    &  0.5554   $\pm $ 0.8215  & 0.5481   $\pm $ 0.7923     & 0.5896   $\pm $ 1.3849   & 0.5576   $\pm $ 0.6279   & 0.4627   $\pm $  1.3697     & 0.4938    $\pm $ 1.3936                                                           \\
& FS\_NDEM & 0.2600  $\pm $1.0000   & 0.3265   $\pm $1.0756     & 0.5230   $\pm $ 1.2735          & 0.5431    $\pm $ 1.3355    &  0.4606   $\pm $ 0.8899  & 0.5372   $\pm $ 0.7602     & 0.5923   $\pm $ 0.4591   & 0.5237   $\pm $ 0.3599     & 0.5220   $\pm $  2.3499    & 0.5061    $\pm $ 0.3497                                                           \\
& FRDMAR   & 0.4880  $\pm $1.4832   & 0.3401   $\pm $1.4431     & 0.5290   $\pm $ 1.0426         & 0.5536    $\pm $ 0.6370    &  0.5549   $\pm $ 1.2020  & 0.5468   $\pm $ 0.7524     & 0.5810   $\pm $ 0.4312   & 0.5559   $\pm $ 1.0601     & 0.4678   $\pm $  0.2527    & 0.4995    $\pm $ 0.8167                                                           \\
& FNRS  & 0.1040$\pm $0.8944 & 0.3224 $\pm $ 0.7756  & 0.5618  $\pm $ 0.5340       & 0.5183 $\pm $1.4505   & \textbf{0.5931 $\pm $ 0.5570} & 0.4850 $\pm $ 1.0908 & 0.5292 $\pm $ 0.9850  & 0.6052 $\pm$ 0.8778    & 0.4497 $\pm $ 1.7229    & 0.4818  $\pm $  0.4841                                            \\
& HANDI & 0.5160$\pm $2.3022 & 0.4027 $\pm $  2.1187  & 0.5397 $\pm $ 0.8283       & 0.5183 $\pm $1.4357      & 0.5531 $\pm $ 0.9390 & 0.5507 $\pm $ 1.0932  & 0.5592 $\pm $ 0.2670 & 0.5824 $\pm$    1.2339 &  0.5401 $\pm $  0.5053   & 0.5761  $\pm $ 0.3945                                                             \\
& GBFRS & \textbf{0.5781$\pm $0.1443} & \textbf{0.4793 $\pm $ 0.0672} & \textbf{0.5893  $\pm $  0.0357 }          & \textbf{ 0.6381 $\pm $ 0.0596 }  & 0.5677 $\pm $ 0.0868 & \textbf{0.5522  $\pm $ 0.0777}  & \textbf{0.5978$\pm $ 0.0543 }& \textbf{0.6465 $\pm$   0.0375}  &  \textbf{0.5802   $\pm $  0.0737 } &  \textbf{ 0.6465   $\pm $ 0.0242  }      \\ \hline
\end{tabular*}} 
\end{table*}

	\begin{figure}[!h]
	\centering
	\includegraphics[width=3in]{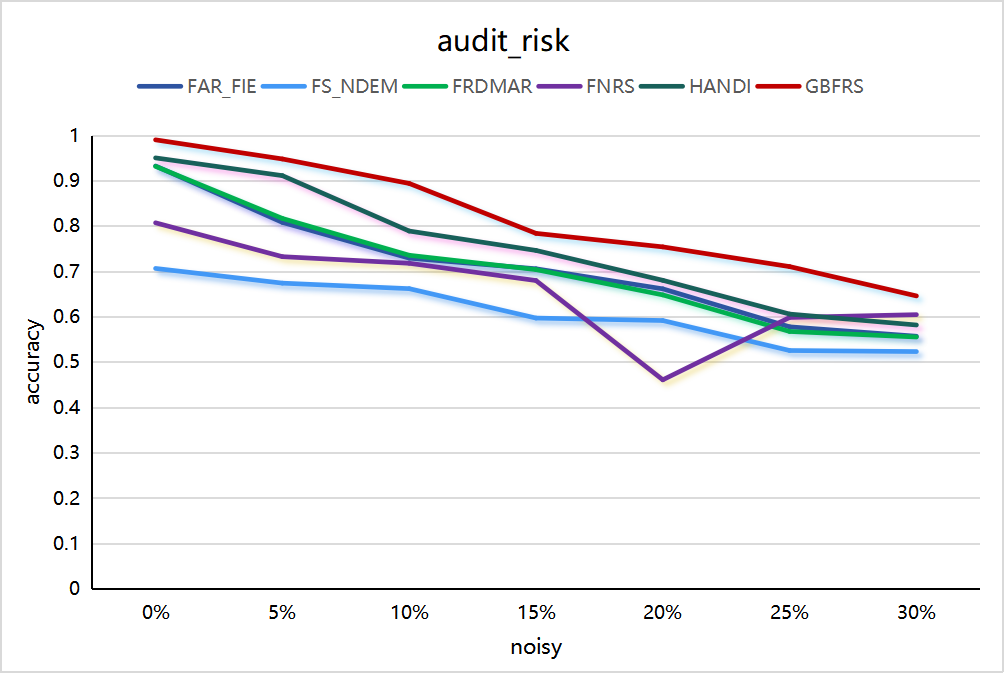}\\
	(a) Comparison of the accuracy of different methods on the audit\_risk dataset with different noise levels\\
	\includegraphics[width=3in]{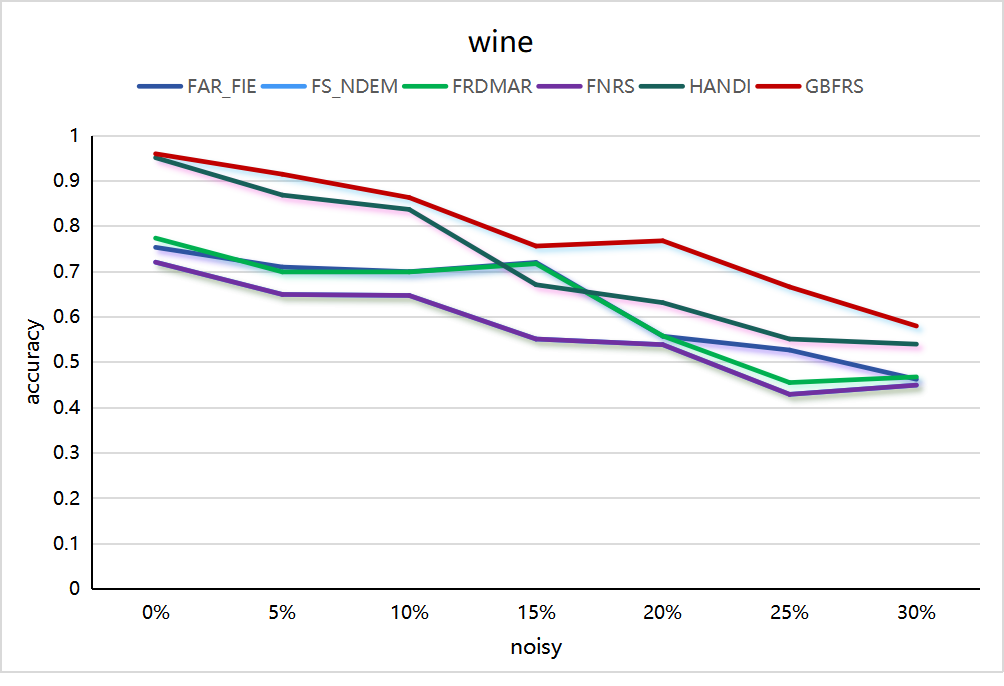}\\
	  (b) Comparison of the accuracy of different methods on the wine dataset with different noise levels
	\caption{Taking the dataset ``wine, audit\_risk'' as an example, the accuracy comparison line chart of each method under different noises. }
	\label{acc}
\end{figure} 

 In the experiment, each dataset is augmented with varying percentages of label noise: 0\%, 5\%, 10\%, 15\%, 20\%, 25\%, and 30\%. Notably, noise is introduced only to the training set, while the test set remains noise-free. Table \ref{tab:2} presents the accuracy comparison of all methods using the kNN classifier. The most accurate algorithms for each dataset are highlighted in bold. At 0\% noise level, HANDI or FRDMAR algorithms generally achieve the best results. However, as label noise increased, GBFRS exhibits robustness and high accuracy across most datasets. This can be attributed to the coarser granularity size of granular-balls, which mitigates the influence of noisy points. The label of each granular-ball is determined by the majority within the ball, minimizing the impact of minority-label noise points and thereby enhancing lower approximation accuracy.
 
 Fig. \ref{acc} illustrates the accuracy variations of different methods across different noise levels on the `wine' and `audit\_risk' datasets. It is evident that accuracy decreases with increasing noise for all methods. Nonetheless, GBFRS maintains the highest accuracy even under high noise conditions, underscoring the effectiveness of the proposed algorithm.


\begin{table*}[!h]
\setlength{\tabcolsep}{3pt}
\caption{Comparison of the accuracy of different methods for datasets with added attribute noise}
\label{tab:3}
\renewcommand\arraystretch{1.5}
\fontsize{5.8}{5.8}\selectfont
\setlength{\tabcolsep}{1.45mm}{ 
\begin{tabular*}{\linewidth}{@{}ccccccccccc@{}} 
\hline
Dataset & zoo                         & lymphography                & primary-tumor                                  & backup-large                & iono                        & Diabetes                    & wdbc                        & audit\_risk                 & wine                        & Parkinson\_Multiple         \\ \hline
FAR\_FIE    & 0.3828  $\pm $1.7807            & 0.4582  $\pm $2.1527            & 0.5270  $\pm $1. 3998                   & 0.7561   $\pm $0.9806           & 0.6945   $\pm $1.1967            & 0.5735   $\pm $0.4808            & 0.7478   $\pm $0.5125          & 0.5214   $\pm $0.9111            & 0.5835   $\pm $1.3135             & 0.5294    $\pm $0.7657           \\
FS\_NDEM    & 0.5345 $\pm $1.2191           & 0.7767 $\pm $0.9189           & 0.6314 $\pm $1.2104                  & 0.8053  $\pm $0.8976          & 0.6931  $\pm $0.8716           & \textit{ 0.6833  $\pm $0.3761   }        & 0.8847  $\pm $0.4247          & 0.5115  $\pm $0.6890           & 0.7148  $\pm $1.9435            & 0.5137   $\pm $0.4488           \\

FRDMAR    & 0.3690 $\pm $1.9658           & 0.4836 $\pm $2.7653           & 0.5255 $\pm $1.6923               & 0.7482  $\pm $1.8852          & 0.7017  $\pm $1.3841           & 0.5815  $\pm $0.9664           & 0.7566  $\pm $0.3741          & 0.5149  $\pm $0.9991           & 0.5909  $\pm $1.7045            & 0.5260   $\pm $0.5407           \\
HANDI   &\textit{ 0.9103$\pm $0.7711 }         & \textit{ 0.8082$\pm $1.2814 }         & \textit{ 0.6526$\pm $1.4461 }           & \textit{ 0.9794$\pm $0.1486 }         & \textbf{0.8615$\pm $0.7155} & 0.6830$\pm $0.8777          & \textbf{0.9263$\pm $0.4211} & \textbf{0.9524$\pm $0.3851} & \textit{ 0.9159$\pm $0.4754 }         & \textit{ 0.8256$\pm $0.3047 }         \\
GBFRS   & \textbf{0.9121$\pm $0.1114} & \textbf{0.8138$\pm $0.0393} & \textbf{0.6933$\pm $0.0369}  & \textbf{0.9833$\pm $0.0167} & \textit{0.8613$\pm $0.0722 }         & \textbf{0.6945$\pm $0.0398} & \textit{0.9118$\pm $0.0320 }         & \textit{0.9458$\pm $0.0226 }         & \textbf{0.9200$\pm $0.0619} & \textbf{0.8806$\pm $0.0446} \\ \hline
\end{tabular*}}
\end{table*}

In addition to label noise, another form of noise known as attribute noise involves perturbations applied to all attributes. Assuming a perturbation rate of 0.1, we conduct experiments to assess the performance of various algorithms on datasets with randomly perturbed attributes. After attribute reduction, the classification uses kNN classifier. Table \ref{tab:3} highlights the best results in bold and the second-best results in italics. GBFRS achieves optimal results for most datasets, as shown in Table \ref{tab:3}. While on three datasets GBFRS shows slightly inferior performance compared to the HANDI algorithm, these differences are minimal and all results were second-best. These experimental findings indicate that replacing sample points with granular-balls of different granularity levels can enhance model robustness against attribute noise.

\section{Conclusions}\label{sec6}

This paper introduces granular-ball computing into fuzzy rough sets and proposes a granular-ball fuzzy rough set framework, redefining the upper and lower approximations of GBFRS. Using granular-balls to replace sample points reduces the influence of noise points and improves the robustness of fuzzy rough sets. In addition, based on granular-balls, we re-fuzzify the similarity relationship and then use the lower approximation to define the weighted fuzzy dependency to achieve attribute reduction. Its relevant convergence is proven. Compared with other feature selection methods on the UCI dataset, we prove that the proposed model can achieve better accuracy in most cases. At present, granular-ball computing is not fully adaptive due to the threshold parameter. Therefore, we will also explore performance optimization to speed up the algorithm and improve running efficiency. In addition, another future work will focus on extending our proposed framework to other fuzzy rough set models to improve their performance.

\subsection{Acknowledgments}
This work was supported in part by the National Natural Science Foundation of China under Grant Nos. 62222601, 62221005, 62176033 and 61936001, Key Cooperation Project of Chongqing Municipal Education Commission under Grant No. HZ2021008, and Natural Science Foundation of Chongqing under Grant (cstc2022ycjh-bgzxm0128, cstc2021ycjh-bgzxm0013 and CSTB2023NSCQ-JQX0034).

\bibliographystyle{abbrv}
\bibliography{references}

\end{document}